\documentclass[11pt]{article}
\usepackage{epsfig}
\usepackage{amsmath}
\usepackage{amssymb}
\usepackage{amscd}
\usepackage{picinpar}
\usepackage{times}
\usepackage{graphicx}
\usepackage{wrapfig}
\usepackage{xspace}
\usepackage{diagrams}
\usepackage{amsmath,amsfonts,amssymb,amsthm}

\newtheorem{theorem}{Theorem}[section]
\newtheorem{lemma}[theorem]{Lemma}

\newtheorem{definition}[theorem]{Definition}

\setlength{\tabcolsep}{1pt}

\textwidth 6.5in \textheight 9.0in \oddsidemargin 0.0in
\evensidemargin 0.0in \topmargin -0.5in
%\addtolength{\textwidth}{1mm}
\addtolength{\columnsep}{2mm}

\begin{document}

\title{Geometric Understanding of Deep Learning}

\author{
Na Lei
\thanks{Dalian University of Technology, Dalian, China. Email: nalei@dlut.edu.cn}
\and
%Yang Guo
%\thanks{Stony Brook University, New York, US. Email: yangguo@cs.stonybrook.edu}
%\and
%Jingyao Ke
%\thanks{University of Science and Technology, Anhui, China. Email: keyushu@mail.ustc.edu.cn}
%\and
%Dongsheng An
%\thanks{Stony Brook University, New York, US. Email: doan@cs.stonybrook.edu}
%\and
Zhongxuan Luo
\thanks{Dalian University of Technology, Dalian, China. Email: zxluo@dlut.edu.cn}
\and
Shing-Tung Yau
\thanks{Harvard University, Boston, US. Email: yau@math.harvard.edu}
\and
David Xianfeng Gu
\thanks{Harvard University, Boston, US. Email: gu@cmsa.fas.harvard.edu.}
}

\date{}
%\footnote{The work is supported in part by the NSF of USA and the NSF of China}
\maketitle

%\footnote{The work is supported in part by the NSF of USA and the
%NSF of China}

\begin{abstract}
Deep learning is the mainstream technique for many machine learning tasks, including image recognition, machine translation, speech recognition, and so on. It has outperformed conventional methods in various fields and achieved great successes. Unfortunately, the understanding on how it works remains unclear. It has the central importance to lay down the theoretic foundation for deep learning.

In this work, we give a geometric view to understand deep learning: we show that the fundamental principle attributing to the success is the manifold structure in data, namely natural high dimensional data concentrates close to a low-dimensional manifold, deep learning learns the manifold and the probability distribution on it.

We further introduce the concepts of rectified linear complexity for deep neural network measuring its learning capability, rectified linear complexity of an embedding manifold describing the difficulty to be learned. Then we show for any deep neural network with fixed architecture, there exists a manifold that cannot be learned by the network. Finally, we propose to apply optimal mass transportation theory to control the probability distribution in the latent space.

%An autoencoder is an artificial neural network used for unsupervised learning of efficient codings, it has been widely applied for compression, denoising and pretraining. This work aims at answering the following questions:
%\begin{enumerate}
%\item how to decide the bound of the encoding or representation capability for an autoencoder with a fixed ReLU DNN architecture?
%\item how to describe and compute the complexity of a manifold embedded in the ambient space to be encoded ?
%\item How to verify whether a embedded manifold can be encoded by a ReLU DNN autoencoder?
%\end{enumerate}
%
%For the 1st question, we introduce the concept of rectification complexity of the autoenoder, which is the upper bound of the number of pieces of the encoding/decoding maps. Furthermore, we give a theoretic upper bound for this complexity. For the 2nd question, we we introduce the concept of rectification complexity of a manifold, which is the minimal number of pieces needed to encode it. By comparing the complexity of the autoencoder and that of the manifold, we can answer the 3rd problem. Furthermore, we show for any ReLU DNN with fixed architecture, one can find a manifold that can not be encoded by it.
%
%Experimental results support our theoretic results and assist to visualize the nature of autoencoders.
\end{abstract}

\newpage
\section{Introduction}

Deep learning is the mainstream technique for many machine learning tasks, including image recognition, machine translation, speech recognition, and so on \cite{Goodfellow2016}. It has outperformed conventional methods in various fields and achieved great successes. Unfortunately, the understanding on how it works remains unclear. It has the central importance to lay down the theoretic foundation for deep learning.

We believe that the main fundamental principle to explain the success of deep learning is the \textbf{manifold structure} in the data, there exists a well accepted manifold assumption: \emph{natural high dimensional data concentrates close to a non-linear low-dimensional manifold.}

\paragraph*{Manifold Representation} The main focus of various deep learn methods is to learn the manifold structure from the real data and obtain a  parametric representation of the manifold.
In general, there is a probability distribution $\mu$ in the ambient space $\mathcal{X}$, the support of $\mu$ is a low dimensional manifold $\Sigma\subset \mathcal{X}$. For example, an autoencoder learns the encoding map $\varphi_\theta:\mathcal{X}\to\mathcal{F}$ and the decoding map $\psi_\theta:\mathcal{F}\to \mathcal{X}$, where $\mathcal{F}$ is the latent space. The parametric representation of the input manifold $\Sigma$ is given by the decoding map $\psi_\theta$. The reconstructed manifold $\tilde{\Sigma}=\psi_\theta\circ\varphi_\theta(\Sigma)$ approximates input manifold. Furthermore, the DNN also learns and controls the distribution induced by the encoder $(\varphi_\theta)_*\mu$ defined on the latent space. Once the parametric manifold structure is obtained, it can be applied for various application, such as randomly generating a sample on $\tilde{\Sigma}$ as a generative model. Image denoising can be reinterpreted geometrically as projecting a noisy sample onto $\tilde{\Sigma}$ representing the clean image manifold, the closest point on $\tilde{\Sigma}$ gives the denoised image.

\paragraph*{Learning Capability} An autoencoder implemented by a ReLU DNN offers a piecewise functional space, the manifold structure can be learned by optimizing  special loss functions. We introduce the concept of \emph{Rectified Linear Complexity} of a DNN, which represents the upper bound of the number of pieces of all the functions representable by the DNN, and gives a measurement for the learning capability of the DNN. On the other hand, the piecewise linear encoding map $\varphi_\theta$ defined on the ambient space is required to be homemorphic from $\Sigma$ to a domain on $\mathcal{F}$. This requirement induces strong topological constraints of the input manifold $\Sigma$. We introduce another concept \emph{Rectified linear Complexity} of an embedded manifold $(\Sigma,\mathcal{X})$, which describes the minimal number of pieces for a PL encoding map, and measures the difficulty to be encoded by a DNN. By comparing the complexities of the DNN and the manifold, we can verify if the DNN can learn the manifold in principle. Furthermore, we show for any DNN with fixed architecture, there exists an embedding manifold that can not be encoded by the DNN.

%\paragraph*{Approximation Accuracy} The reconstructed manifold $\tilde{\Sigma}$ approximates the manifold $\Sigma$ in data. The approximation accuracy is analyzed by empirical experiments on low dimensional surfaces. Current autoencoders accomplish the learning tasks reasonably well, but leave large spaces to be improved. The subtle geometric details are lost during the encoding/decoding process, the mappings are not globally homeomorphic.

\paragraph*{Latent Probability Distribution Control} The distribution $(\varphi_\theta)_*\mu$ induced by the encoding map can be controlled by designing special loss functions to modify the encoding map $\varphi_\theta$. We also propose to use optimal mass transportation theory to find the optimal transportation map defined on the latent space, which transforms simple distributions, such as Gaussian or uniform, to $(\varphi_\theta)_*\mu$. Comparing to the conventional WGAN model, this method replaces the blackbox by explicit mathematical construction, and avoids the competition between the generator and the discriminator.
\subsection{Contributions}
This work proposes a geometric framework to understand autoencoder and general deep neural networks and explains the main theoretic reason for the great success of deep learning - the manifold structure hidden in data. The work introduces the concept of rectified linear complexity of a ReLU DNN to measure the learning capability, and rectified linear complexity of an embedded manifold to describe the encoding difficulty. By applying the concept of complexities, it is shown that for any DNN with fixed architecture, there is a manifold too complicated to be encoded by the DNN. Finally, the work proposes to apply optimal mass transportation map to control the distribution on the latent space.

\subsection{Organization}
The current work is organized in the following way: section~\ref{sec:previous_works} briefly reviews the literature of autoencoders; section~\ref{sec:manifold} explains the manifold representation; section~\ref{sec:learning_capability} quantifies the learning capability of a DNN and the learning difficulty for a manifold; section~\ref{sec:measure_control} proposes to control the probability measure induced by the encoder using optimal mass transportation theory. Experimental results are demonstrated in the appendix~\ref{sec:appendix}.

\section{Previous Works}
\label{sec:previous_works}

The literature of autoencoders is vast, in the following we only briefly review the most related ones as representatives.

\paragraph*{Traditional Autoencoders (AE)}

The \emph{traditional autoencoder (AE)} framework first appeared in \cite{Baldi1989NNP}, which was initially proposed to achieve dimensionality reduction. \cite{Baldi1989NNP} use linear autoencoder to compare with PCA. With the same purpose, \cite{HinSal2006DR} proposed a \emph{deep autoencoder} architecture, where the encoder and the decoder are multi-layer deep networks.
Due to non-convexity of deep networks, they are easy to converge to poor local optima with random initialized weights. To solve this problem, \cite{HinSal2006DR} used restricted Boltzmann machines (RBMs) to pre-train the model layer by layer before fine-tuning. Later \cite{Bengio2007SLA} used traditional AEs to pre-train each layer and got similar results.

\paragraph*{Sparse Encoders}

The traditional AE uses bottleneck structure, the width of the middle later is less than that of the input layer. The \emph{sparse autoencoder (SAE)} was introduced in \cite{Foldiak1998SCP}, which uses over-complete latent space, that is the middle layer is wider than the input layer. Sparse autoencoders \cite{OLSHAUSEN19973311,Ranzato2006ELS,Ranzato2007SFL} were proposed.

Extra regularizations for sparsity was added in the object function, such as the KL divergence between the bottle neck layer output distribution and the desired distribution \cite{Ng2011SAE}. SAEs are used in a lot of classification tasks \cite{Xu2016SSAE, Tao2015SSAE}, and feature tranfer learning \cite{Deng2013SAE}.

\paragraph*{Denoising Autoencoder (DAE)}
\cite{Vincent2010SDA,Vincent2008ECR} proposed \emph{denoising autoencoder (DAE)} in order to improve the robustness from the corrupted input. DAEs add regularizations on inputs to reconstruct a ``repaired'' input from a corrupted version. \emph{Stacked denoising autoencoders (SDAEs)} is constructed by stacking multiple layers of DAEs, where each layer is pre-trained by DAEs. The DAE/SDAE is suitable for denosing purposes, such as speech recognition \cite{Feng2014SFD,Feng2014SFD}, and removing musics from speeches \cite{Zhao2015MR}, medical image denoising \cite{Gondara2016MID} and  super-resolutions \cite{Chaitanya2017IRM}.

\paragraph*{Contractive Autoencoders (CAEs)} \cite{Rifai2011CAE} proposed \emph{contractive autoencoders (CAEs)} to achieve robustness by minimizing the first order variation, the Jacobian. The concept of contraction ratio is introduced, which is similar to the Lipschitz constants. In order to learn the low-dimensional structure of the input data, the panelty of construction error
encourages the contraction ratios on the tangential directions of the manifold to be close to $1$, and on the orthogonal directions to the manifold close to $0$.
Their experiments showed that the learned representations performed as good as DAEs on classification problems and showed that their contraction properties are similar. Following this work, \cite{Rifai2011HOC} proposed \emph{the higher-order CAE} which adds an additional penalty on all higher derivatives.

\paragraph*{Generative Model} Autoencoders can be transformed into a generative model by sampling in the latent space and then decode the samples to obtain new data. \cite{Vincent2010SDA} used Bernoulli sampling to AEs and DAEs to first implement this idea. \cite{Bengio2013GDA} used Gibbs sampling to alternatively sample between the input space and the latent space, and transfered DAEs into generative models. They also proved that the generated distribution is consistent with the distribution of the dataset. \cite{Rifai2012GPS} proposed a generative model by sampling from CADs. They used the information of the Jacobian to sample around the latent space.

The \emph{Variational autoencoder (VAE)} \cite{Kingma2013AutoEncodingVB} use probability perspective to interprete autoencoders. Suppose the real data distribution is $\mu$ in $\mathcal{X}$, the encoding map $\varphi_\theta: \mathcal{X}\to \mathcal{F}$ pushes $\mu$ forward to a distribution in the latent space $(\varphi_\theta)_*\mu$. VAE optimizes $\varphi_\theta$, such that $(\varphi_\theta)_*\mu$ is normal distributed $(\varphi_\theta)_*\mu\sim \mathcal{N}(0,1)$ in the latent space.

Followed by the big success of GANs, \cite{Goodfellow2016AAE} proposed \emph{adversarial autoencoders (AAEs)}, which use GANs to minimize the discrepancy between the push forward distribution $(\varphi_\theta)_*\mu$ and the desired distribution in the latent space.

\section{Manifold Structure}
\label{sec:manifold}
Deep learning is the mainstream technique for many machine learning tasks, including image recognition, machine translation, speech recognition, and so on \cite{Goodfellow2016}. It has outperformed conventional methods in various fields and achieved great successes. Unfortunately, the understanding on how it works remains unclear. It has the central importance to lay down the theoretic foundation for deep learning.

We believe that the main fundamental principle to explain the success of deep learning is the manifold structure in the data, namely \emph{natural high dimensional data concentrates close to a non-linear low-dimensional manifold.}

The goal of deep learning is to learn the manifold structure in data and the probability distribution associated with the manifold.

\subsection{Concepts and Notations}

The concepts related to manifold are from differential geometry, and have been translated to the machine learning language.

\begin{figure}[h]
\begin{center}
\begin{tabular}{c}
\includegraphics[width=0.6\textwidth]{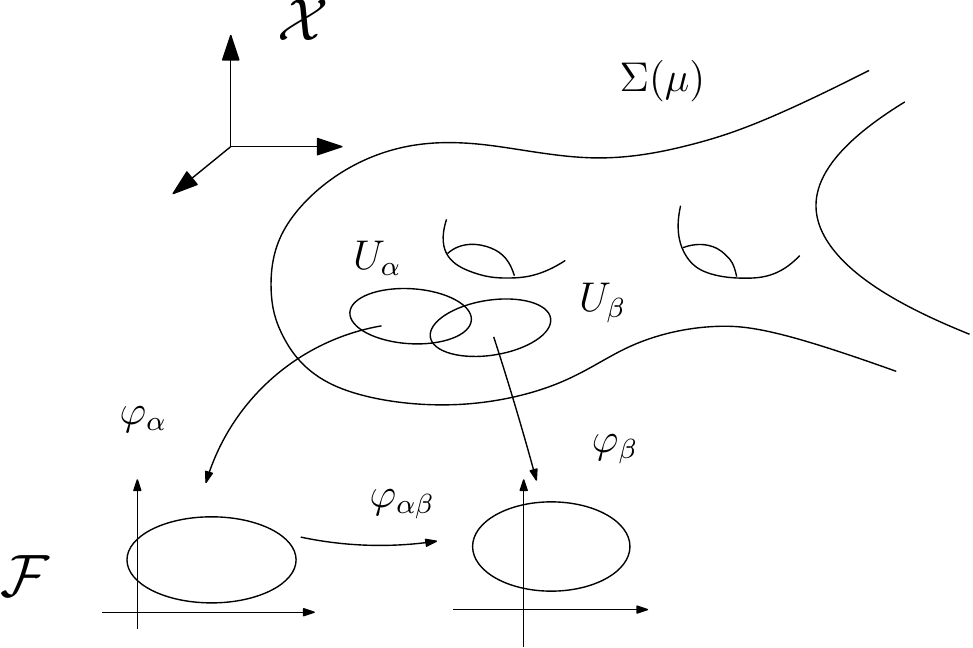}\\
\end{tabular}
\end{center}
\caption{A manifold structure in the data.\label{fig:manifold}}
\end{figure}

\begin{definition}[Manifold]
An $n$-dimensional manifold $\Sigma$ is a topological space, covered by a set of open sets $\Sigma\subset \bigcup_\alpha U_\alpha$.
For each open set $U_\alpha$, there is a homeomorphism $\varphi_\alpha:U_\alpha\to \mathbb{R}^n$, the pair $(U_\alpha,\varphi_\alpha)$ form a chart.
The union of charts form an atlas $\mathcal{A}=\{(U_\alpha,\varphi_\alpha)\}$. If $U_\alpha \cap U_\beta \neq\emptyset$, then the chart transition map is given by
$\varphi_{\alpha\beta}: \varphi_\alpha(U_\alpha\cap U_\beta) \to  \varphi_\beta(U_\alpha\cap U_\beta)$,
\[
   \varphi_{\alpha\beta}:= \varphi_\beta\circ\varphi_\alpha^{-1}.
\]
\end{definition}

\begin{figure}[h!]
\begin{center}
\begin{tabular}{cccc}
\includegraphics[height=0.33\textwidth]{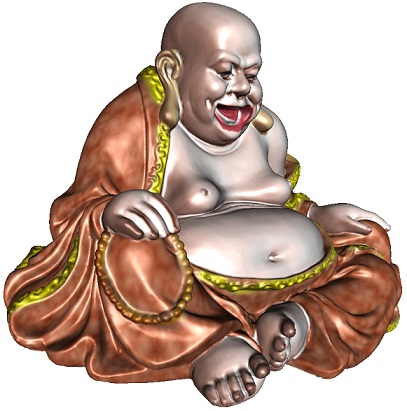}&
\includegraphics[width=0.33\textwidth]{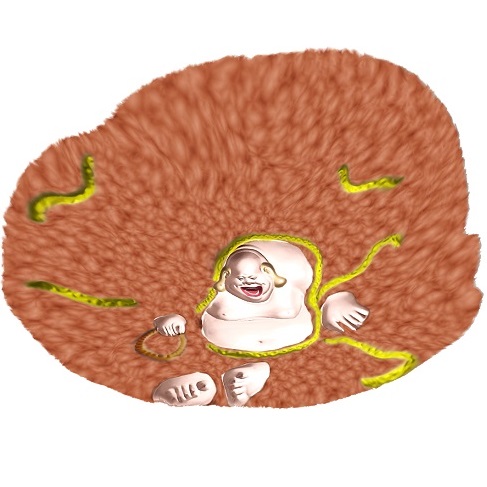}&
\includegraphics[height=0.32\textwidth]{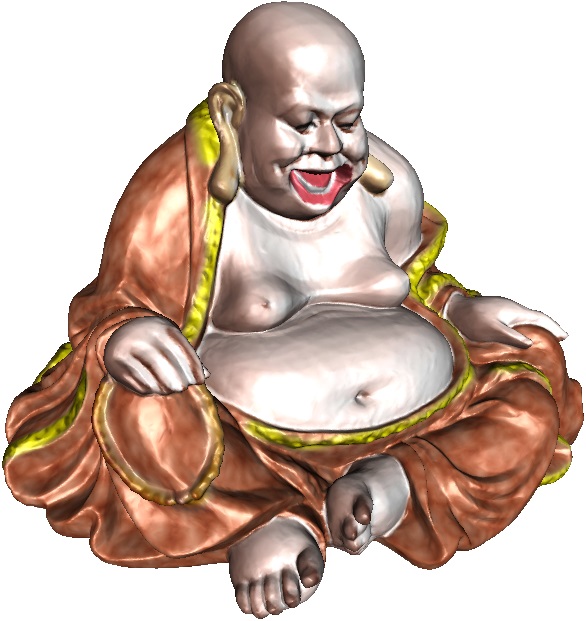}\\
a. Input manifold &b. latent representation & c. reconstructed manifold \\
$M\subset\mathcal{X}$ & $D=\varphi_\theta(M)$ & $\tilde{M}=\psi_\theta(D)$\\
\end{tabular}
\end{center}
\caption{Auto-encoder pipeline.}
\label{fig:autoencoder_pipeline}
\end{figure}

As shown in Fig.~\ref{fig:manifold}, suppose $\mathcal{X}$ is the \emph{ambient space}, $\mu$ is a probability distribution defined on $\mathcal{X}$, represented as a density function $\mu:\mathcal{X}\to\mathbb{R}_{\ge 0}$. The \emph{support} of $\mu$,
\[
    \Sigma(\mu) :=\{\mathbf{x}\in \mathcal{X}| \mu(x)>0\}
\]
is a low-dimensional manifold. $(U_\alpha,\varphi_\beta)$ is a local chart, $\varphi_\alpha:U_\alpha\to \mathcal{F}$ is called an \emph{encoding map}, the parameter domain $\mathcal{F}$ is called the \emph{latent space} or \emph{feature space}. A point $\mathbf{x}\in \Sigma$ is called a \emph{sample}, its parameter $\varphi_\alpha(\mathbf{x})$ is called the \emph{code} or \emph{feature} of $\mathbf{x}$. The inverse map $\psi_\alpha:=\varphi_\alpha^{-1}:\mathcal{F}\to\Sigma$ is called the \emph{decoding map}. Locally, $\psi_\alpha:\mathcal{F}\to\Sigma$ gives a local \emph{parametric representation} of the manifold.

Furthermore, the encoding map $\varphi_\alpha:U_\alpha\to \mathcal{F}$ induces a \emph{push-forward} probability measure $(\varphi_\alpha)_*\mu$ defined on the latent space $\mathcal{F}$: for any measurable set $B\subset \mathcal{F}$,
\[
    (\varphi_\alpha)_*\mu (B) := \mu(\varphi_\alpha(B)).
\]

The goal for deep learning is to learn the encoding map $\varphi_\alpha$, decoding map $\psi_\alpha$, the parametric representation of the manifold $\psi_\alpha:\mathcal{F}\to \Sigma$, furthermore the push-forward probability $(\varphi_\alpha)_*\mu$ and so on. In the following, we explain how an autoencoder learns the manifold and the distribution.

\subsection{Manifold Learned by an Autoencoder}

Autoencoders are commonly used for unsupervised learning \cite{Bengio2013RLR}, they have been applied for compression, denoising, pre-training and so on.
In abstract level, autoencoder learns the low-dimensional structure of data and represent it as a parametric polyhedral manifold, namely a piecewise linear (PL) map from latent space (parameter domain) to the ambient space, the image of the PL mapping is a manifold. Then autoencoder utilizes the polyhedral manifold as the approximation of the manifold in data for various applications. In implementation level, an autoencoder partition the manifold into pieces (by decomposing the ambient space into cells) and approximate each piece by a hyperplane as shown in Fig.~\ref{fig:autoencoder_pipeline}.

Architecturally, an autoencoder is a feedforward, non-recurrent neural network with the output layer having the same number of nodes as the input layer, and with the purpose of reconstructing its own inputs. In general, a bottleneck layer is added for the purpose of dimensionality reduction. The input space $\mathcal{X}$ is the ambient space, the output space is also the ambient space. The output space of the bottle neck layer $\mathcal{F}$ is the latent space.
\begin{diagram}
 \{(\mathcal{X},\mathbf{x}),\mu,\Sigma\} &\rTo^{\varphi} &\{(\mathcal{F},\mathbf{z}), D\} \\
&\rdTo_{\psi \circ \varphi} &\dTo^{\psi} \\
& &\{(\mathcal{X},\mathbf{\tilde{x}}),\tilde{\Sigma}\}
\end{diagram}
An autoencoder always consists of two parts, the encoder and the decoder. The encoder takes a sample $\mathbf{x}\in \mathcal{X}$ and maps it to $\mathbf{z}\in \mathcal{F}$, $\mathbf{z}=\varphi(\mathbf{x})$, the image $\mathbf{z}$ is usually referred to as \emph{latent representation} of $\mathbf{x}$. The encoder $\varphi:\mathcal{X}\rightarrow \mathcal{F}$ maps $\Sigma$ to its latent representation $D=\varphi(\Sigma)$ homemorphically. After that, the decoder $\psi:\mathcal{F}\rightarrow \mathcal{X}$ maps $\mathbf{z}$ to the reconstruction $\mathbf{\tilde{x}}$  of the same shape as $\mathbf{x}$, $\mathbf{\tilde{x}} = \psi(\mathbf{z}) = \psi\circ\varphi(\mathbf{x})$. Autoencoders are also trained to minimise reconstruction errors:
\[
    \varphi,\psi = \text{argmin}_{\varphi,\psi} \int_\mathcal{X} \mathcal{L}(\mathbf{x},\psi\circ\varphi(\mathbf{x})) d\mu(\mathbf{x}),
\]
where $\mathcal{L}(\cdot,\cdot)$ is the loss function, such as squared errors. The reconstructed manifold $\tilde{\Sigma}=\psi\circ\varphi(\Sigma)$ is used as an approximation of $\Sigma$.

In practice, both encoder and decoder are implemented as ReLU DNNs, parameterized by $\theta$. Let $X=\{\mathbf{x}^{(1)},\mathbf{x}^{(2)},\dots, \mathbf{x}^{k}\}$ be the training data set, $X\subset \Sigma$, the autoencoder optimizes the following loss function:
\[
    \min_\theta\mathcal{L}(\theta) = \min_\theta\frac{1}{k}\sum_{i=1}^k \| \mathbf{x}^{(i)}-\psi_\theta\circ\varphi_\theta(\mathbf{x}^{(i)})\|^2.
\]
Both the encoder $\varphi_\theta$ and the decoder $\psi_\theta$ are piecewise linear mappings. The encoder $\varphi_\theta$ induces a cell decomposition $\mathcal{D}(\varphi_\theta)$ of the ambient space
\[
    \mathcal{D}(\varphi_\theta): \mathcal{X} = \bigcup_\alpha U_\theta^\alpha,
\]
where $U_\theta^\alpha$ is a convex polyhedron, the restriction of $\varphi_\theta$ on it is an affine map. Similarly, the piecewise linear map $\psi_\theta\circ\varphi_\theta$ induces a polyhedral cell decomposition $\mathcal{D}(\psi_\theta,\varphi_\theta)$, which is a refinement (subdivision) of $\mathcal{D}(\varphi_\theta)$. The reconstructed polyhedral manifold has a parametric representation $\psi_\theta:\mathcal{F}\to\mathcal{X}$, which approximates the manifold $M$ in the data.

Fig.~\ref{fig:autoencoder_pipeline} shows an example to demonstrate the learning results of an autoencoder. The ambient space $\mathcal{X}$ is $\mathbb{R}^3$, the manifold $\Sigma$ is the buddha surface as shown in frame (a). The latent space is $\mathbb{R}^2$, the encoding map $\varphi_\theta:\mathcal{X}\to D$ parameterizes the input manifold to a domain on $D\subset\mathcal{F}$ as shown in frame (b). The decoding map $\psi_\theta:D\to \mathcal{X}$ reconstructs the surface into a piecewise linear surface $\tilde{\Sigma}=\psi_\theta\circ\varphi_\theta(\Sigma)$, as shown in frame (c). In ideal situation, the composition of the encoder and decoder $\psi_\theta\circ\varphi_\theta\sim id$ should equal to the identity map, the reconstruction $\tilde{\Sigma}$ should coincide with the input $\Sigma$. In reality, the reconstruction $\tilde{\Sigma}$ is only a piecewise linear approximation of $\Sigma$.

\begin{figure}[h!]
\begin{center}
\begin{tabular}{cccc}
\includegraphics[height=0.33\textwidth]{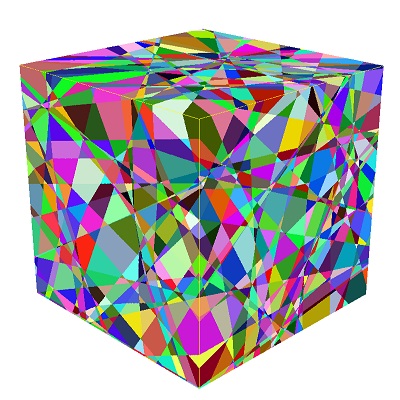}&
\includegraphics[height=0.3\textwidth]{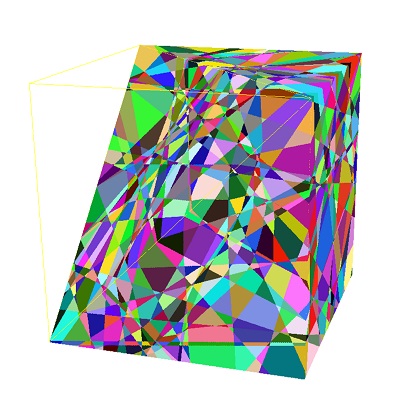}&
\includegraphics[height=0.32\textwidth]{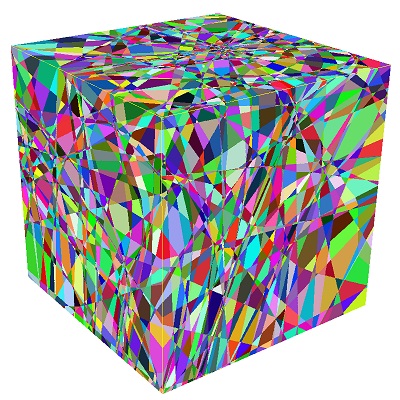}\\
d. cell decomposition&e. cut view of  &f. cell decomposition &\\
 $\mathcal{D}(\varphi_\theta)$&$\mathcal{D}(\varphi_\theta)$  &$\mathcal{D}(\psi_\theta\circ\varphi_\theta)$&\\
\end{tabular}
\end{center}
\caption{Cell decomposition induced by the encoding, decoding maps.}
\label{fig:cell_decomposition}
\end{figure}

Fig.~\ref{fig:cell_decomposition} shows the cell decompositions induced by the encoding map $\mathcal{D}(\varphi_\theta)$ and that by the reconstruction map $\mathcal{D}(\psi_\theta\circ\varphi_\theta)$ for another autoencoder. It is obvious that $\mathcal{D}(\psi_\theta\circ\varphi_\theta)$ subdivides $\mathcal{D}(\varphi_\theta)$.

\subsection{Direct Applications}

Once the neural network has learned a manifold $\Sigma$, it can be utilized for many applications.

\paragraph*{Generative Model}

%\begin{figure}[h!]
%\begin{center}
%\begin{tabular}{c}
%\includegraphics[height=0.32\textwidth]{generative_model.pdf}\\
%\end{tabular}
%\end{center}
%\caption{Generative model. \label{fig:generative_model}}
%\end{figure}

Suppose $\mathcal{X}$ is the space of all $n\times n$ color images, where each point represents an image. We can define a probability measure $\mu$, which represents the probability for an image to represent a human face. The shape of a human face is determined by a finite number of genes. The facial photo is determined by  the geometry of the face, the lightings, the camera parameters and so on. Therefore, it is sensible to assume all the human facial photos are concentrated around a finite dimensional manifold, we call it as human facial photo manifold $\Sigma$.

By using many real human facial photos, we can train an autoendoer to learn the human facial photo manifold. The learning process produces a decoding map $\psi_\theta:\mathcal{F}\to \tilde{\Sigma}$, namely a parametric representation of the reconstructed manifold. We randomly generate a parameter $z\in\mathcal{F}$ (white noise), $\varphi_\theta(z)\in \tilde{\Sigma}$ gives a human facial image. This can be applied as a generative model for generating human facial photos.

\paragraph*{Denoising}
Tradition image denoising performs Fourier transformation of the input noisy image, then filtering out the high frequency components, inverse Fourier transformation to get the denoised image. This method is general and independent of the content of the image.
\begin{figure}[h!]
\begin{center}
\begin{tabular}{c}
\includegraphics[height=0.32\textwidth]{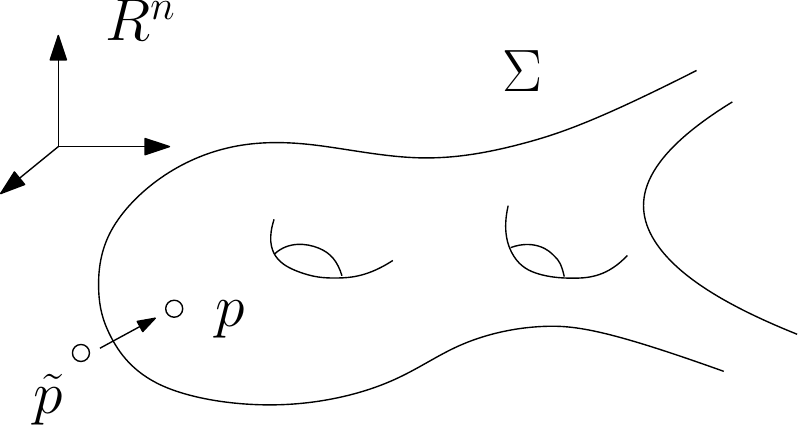}\\
\end{tabular}
\end{center}
\caption{Geometric interpretation of image denoising. \label{fig:denoise}}
\end{figure}

In deep learning, image denoising can be re-interpreted as geometric projection as shown in Fig.~\ref{fig:denoise}. Suppose we perform human facial image denoising. The clean human facial photo manifold is $\Sigma$, the noisy facial image $\tilde{p}$ is not in $\Sigma$ but close to $\Sigma$. We project $\tilde{p}$ to $\Sigma$, the closest point to $\tilde{p}$ on $\Sigma$ is $p$, then $p$ is the denoised image.

\begin{figure}[h!]
\begin{center}
\begin{tabular}{cc}
\includegraphics[height=0.3\textwidth]{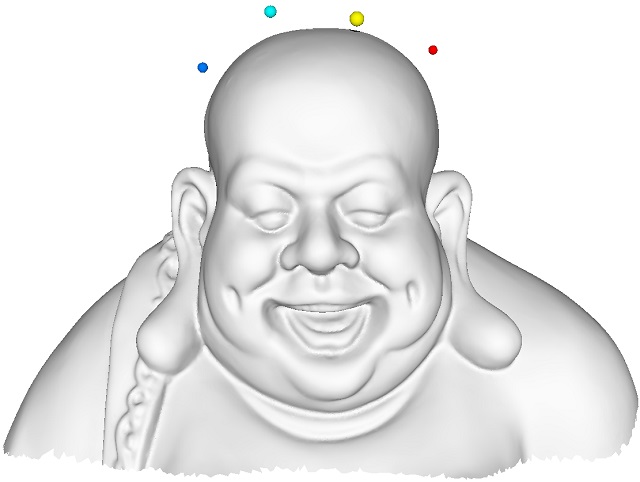}&
\includegraphics[height=0.3\textwidth]{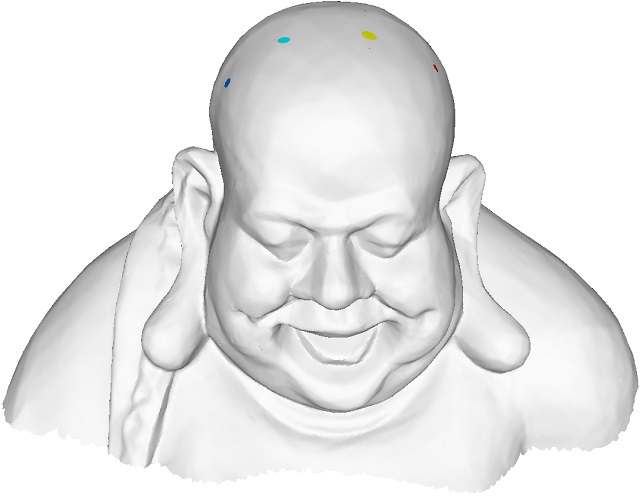}\\
a. input manifold &b. reconstructed manifold\\
\end{tabular}
\end{center}
\caption{Geometric projection. \label{fig:projection}}
\end{figure}

In practice, suppose an noisy facial image is given $\mathbf{x}$, we train an autoencoder to obtain a manifold of clean facial images represented as $\psi_\theta:\mathcal{F}\to\mathcal{X}$ and an encoding map $\varphi_\theta:\mathcal{X}\to\mathcal{F}$, then we encode the noisy image $\mathbf{z}=\varphi(\mathbf{x})$, then maps $\mathbf{z}$ to the reconstructed manifold $\mathbf{\tilde{x}}=\psi_\theta(\mathbf{z})$. The result $\mathbf{\tilde{x}}$ is the denoised image. Fig.~\ref{fig:projection} shows the projection of several outliers onto the buddha surface using an autoencoder.
\setlength{\tabcolsep}{8pt}
\begin{figure}[h!]
\begin{center}
\begin{tabular}{cc}
\includegraphics[height=0.4\textwidth]{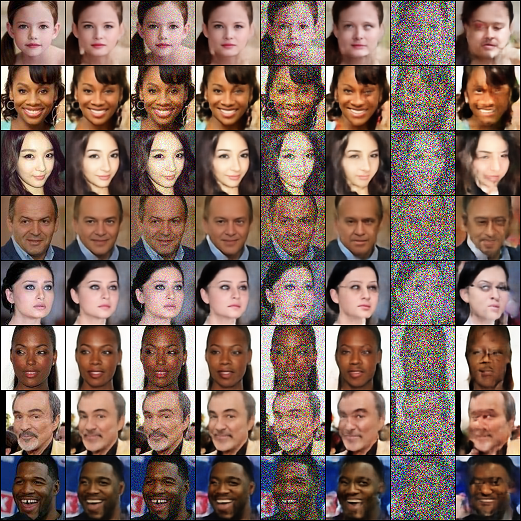}&
\includegraphics[height=0.4\textwidth]{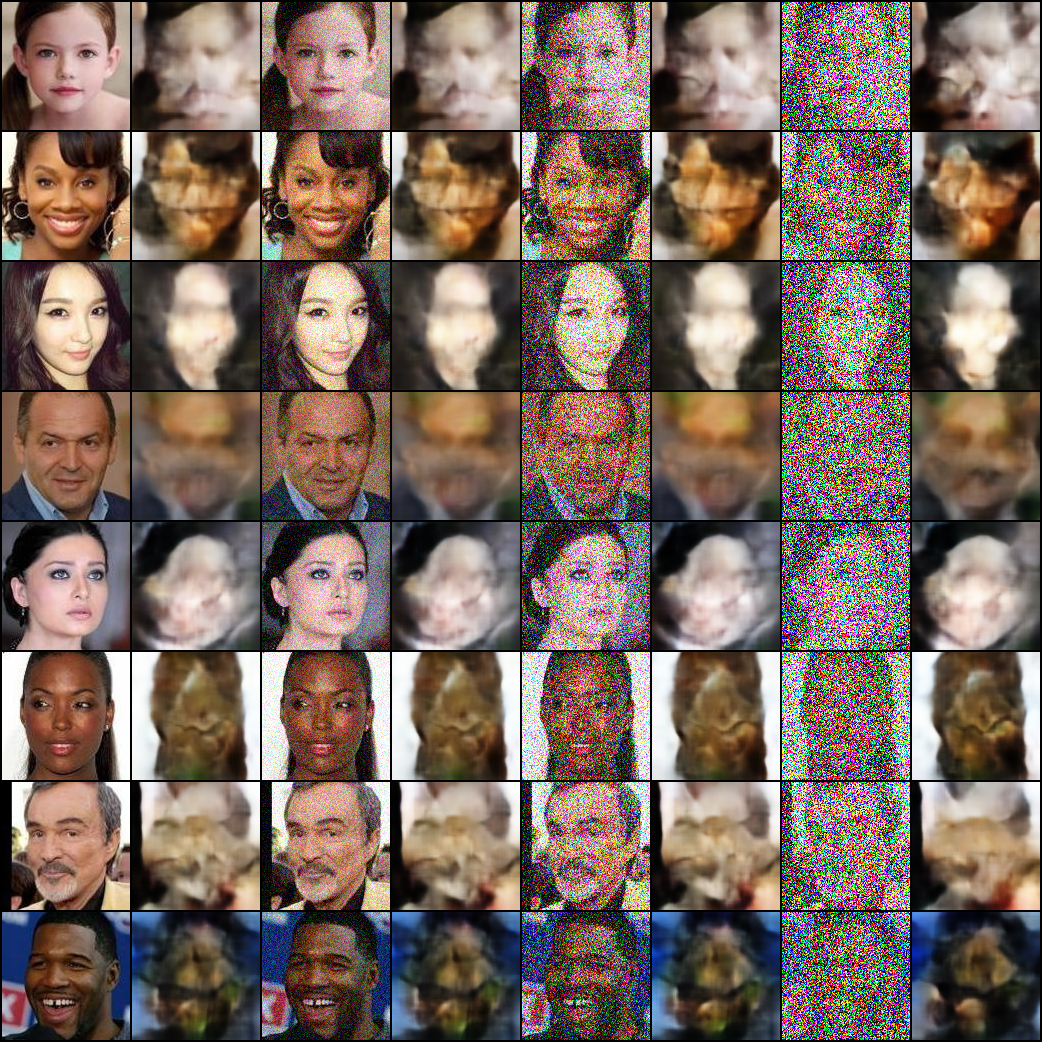}\\
(a) project to the human facial & (b) project to the cat facial \\
image manifold& image manifold\\
\end{tabular}
\end{center}
\caption{Image denoising. \label{fig:face_denoise}}
\end{figure}

We apply this method for human facial image denoising as shown in Fig.~\ref{fig:face_denoise}, in frame (a) we project the noisy image to the human facial image manifold and obtain good denoising result; in frame (b) we use the cat facial image manifold, the results are meaningless. This shows deep learning method heavily depends on the underlying manifold, which is specific to the problem. Hence the deep learning based method is not as universal as the conventional ones.

\section{Learning Capability}
\label{sec:learning_capability}

\setlength{\tabcolsep}{0pt}
\begin{figure}[h!]
\begin{center}
\begin{tabular}{ccc}
\includegraphics[height=0.26\textwidth]{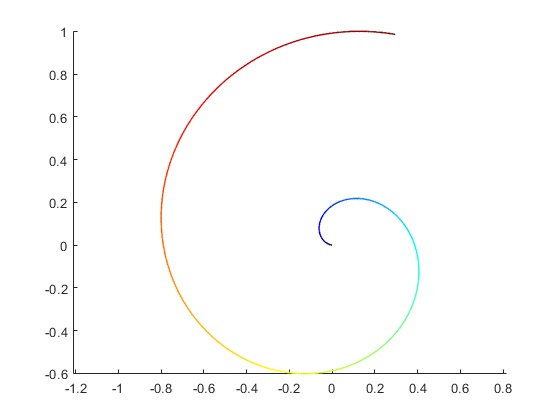}&
\includegraphics[height=0.26\textwidth]{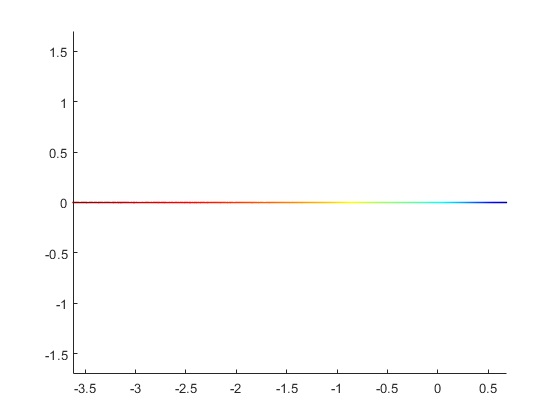}&
\includegraphics[height=0.26\textwidth]{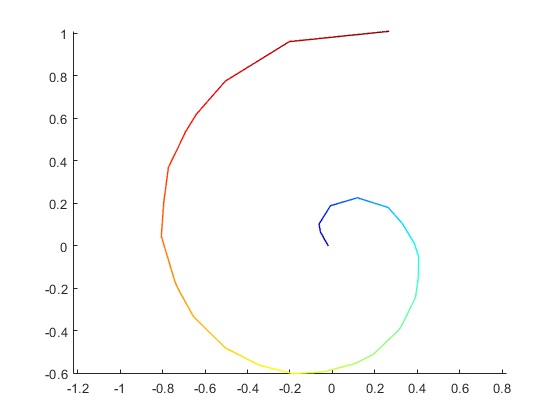}\\
a. Input manifold & b. latent representation & c. reconstructed manifold \\
$M\subset\mathcal{X}$ & $D=\varphi_\theta(M)$ & $\tilde{M}=\psi_\theta(D)$\\
\includegraphics[height=0.26\textwidth]{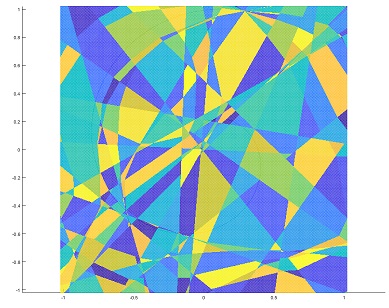}&
\includegraphics[height=0.26\textwidth]{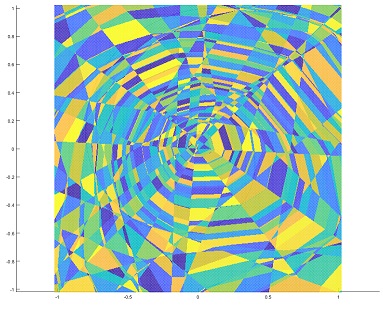}&
\includegraphics[height=0.26\textwidth]{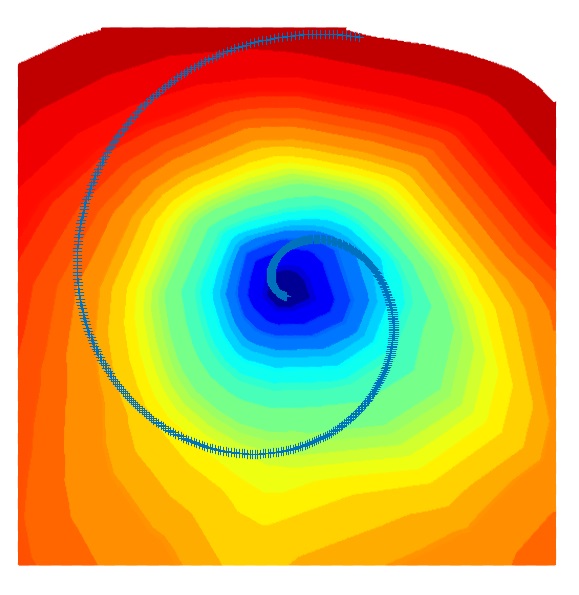}\\
d. cell decomposition & e. cell decomposition & f. level set\\
 $\mathcal{D}(\varphi_\theta)$& $\mathcal{D}(\psi_\theta\circ\varphi_\theta)$&\\
\end{tabular}
\end{center}
\caption{Encode/decode a spiral curve.\label{fig:spiral}}
\end{figure}
\subsection{Main Ideas}
Fig.~\ref{fig:spiral} shows another example, an Archimedean spiral curve embedded in $\mathbb{R}^2$, the curve equation is given by $\rho(\theta) = (a+b\theta)e^{iw\theta}$, $a,b,w>0$ are constants, $\theta\in (0,T]$. For relatively small range $T$, the encoder successfully maps it onto a straight line segment, and the decoder reconstructs a piecewise linear curve with good approximation quality. When we extend the spiral curve by enlarging $T$, then at some threshold, the autoencoder with the same architecture fails to encode it.

The central problems we want to answer are as follows:
\begin{enumerate}
\item How to decide the bound of the encoding or representation capability for an autoencoder with a fixed ReLU DNN architecture?
\item How to describe and compute the complexity of a manifold embedded in the ambient space to be encoded ?
\item How to verify whether a embedded manifold can be encoded by a ReLU DNN autoencoder?
\end{enumerate}

For the first problem, our solutions are based on the geometric intuition of the piecewise linear nature of encoder/decoder maps. By examining fig.~\ref{fig:cell_decomposition} and fig.~\ref{fig:spiral}, we can see the mapping $\varphi_\theta$ and $\psi_\theta$ induces polyhedral cell decompositions of the ambient space $\mathcal{X}$,
$\mathcal{D}(\varphi_\theta)$ and $\mathcal{D}(\psi_\theta\circ\varphi_\theta)$ respectively. The number of cells offers a measurement to describing the representation capabilities of these maps, the upper bound of the number of cells
$\max_\theta |\mathcal{D}(\varphi_\theta)|$ describes the limit of the encoding capability of $\varphi_\theta$. We call this upper bound as the \emph{rectified linear complexity} of the autoencoder. The rectified linear complexity can be deduced from the architecture of the encoder network, as claimed in our theorem \ref{thm:DNN_complexity}.

For the second problem, we introduce the similar concept to the embedded manifold. The encoder map $\varphi_\theta$ has a very strong geometric requirement: suppose $U_k$ is a cell in $\mathcal{D}(\varphi_\theta)$, then $\varphi_\theta:U_k\to\mathcal{F}$ is an affine map to the latent space, its restriction on $U_k\cap \Sigma$ is a homeomorphism $\varphi_\theta: U_k\cap \Sigma\to \varphi_\theta(U_k\cap \Sigma)$. In order to satisfy the two stringent requirements for the encoding map: the piecewise ambient linearity and the local homeomorphism, the number of cells of the decomposition of $\Sigma$ (and of $\mathcal{X}$) must be greater than a lower bound. Similarly, we call this lower bound the \emph{rectified linear complexity} of the pair of the manifold and the ambient space $(\mathcal{X},\Sigma)$. The rectified linear complexity can be derived from the geometry of $\Sigma$ and its embedding in $\mathcal{X}$. Our theorem \ref{thm:encodable_condition} gives a criteria to verify if a manifold can be rectified by a linear map.

For the third problem, we can compare the rectified linear complexity of the manifold and the autoencoder. If the RL complexity of the autoencoder is less than that of the manifold, then the autoencoder can not encode the manifold. Specifically, we show that for any autoencoder with a fixed architecture, there exists an embedded manifold, which can not be encoded by it.

\subsection{ReLU Deep Neuron Networks}
We extend the ReLU activation function to vectors $\mathbf{x}\in \mathbb{R}^n$ through entry-wise operation:
\[
    \sigma(x)=(\max\{0,x_1\},\max\{0,x_2\},\dots,\max\{0,x_n\}).
\]
For any $(m,n)\in\mathbb{N}$, let $\mathcal{A}_m^n$ and $\mathcal{L}_m^n$ denote the class of affine and linear transformations from $\mathbb{R}^m\to\mathbb{R}^n$, respectively.
\begin{definition}[ReLU DNN] For any number of hidden layers $k\in\mathbb{N}$, input and output dimensions $w_0,w_{k+1}\in \mathbb{N}$, a $\mathbb{R}^{w_0}\to \mathbb{R}^{w_{k+1}}$ ReLU DNN is given by specifying a sequence of $k$ natural numbers $w_1,w_2,\dots, w_k$ representing widths of the hidden layers, a set of $k$ affine transformations $T_i:\mathbb{R}^{w_{i-1}}\to\mathbb{R}^{w_i}$ for $i=1,\dots, k$ and a linear transformation $T_{k+1}:\mathbb{R}^{w_k}\to \mathbb{R}^{w_{k+1}}$ corresponding to weights of hidden layers. Such a ReLU DNN is called a $(k+1)$-layer ReLU DNN, and is said to have $k$ hidden layers, denoted as $N(w_0,w_1,\dots,w_k, w_{k+1})$.
\end{definition}
The mapping $\varphi_\theta:\mathbb{R}^{w_0}\to \mathbb{R}^{w_{k+1}}$ represented by this ReLU DNN is
\begin{equation}
\varphi_\theta = T_{k+1}\circ \sigma \circ T_k \circ \cdots \circ T_2\circ \sigma \circ T_1,
\end{equation}
where $\circ$ denotes mapping composition, $\theta$ represent all the weight and bias parameters. The depth of the ReLU DNN is $k+1$, the width is $\max\{w_1,\dots,w_k\}$, the size $w_1+w_2+ \dots + w_k$.

\begin{definition}[PL Mapping] A mapping $\varphi:\mathbb{R}^n\to \mathbb{R}^m$ is a piecewise linear mapping if there exists a finite set of polyhedra whose union is $\mathbb{R}^n$, and $\varphi$ is affine linear over each polyhedron. The number of pieces of $\varphi$ is the number of maximal connected subsets of $\mathbb{R}^n$ over which $\varphi$ is affine linear, denoted as $\mathcal{N}(\varphi)$. We call $\mathcal{N}(\varphi)$ as the \emph{rectified linear complexity} of $\varphi$.
\end{definition}

\begin{definition}[Rectified Linear Complexity of a ReLU DNN] Given a ReLU DNN $N(w_0,\dots,w_{k+1})$, its rectified linear complexity is the upper bound of the rectified linear complexities of all PL functions $\varphi_\theta$ represented by $N$,
\[
    \mathcal{N}(N):= \max_\theta \mathcal{N}(\varphi_\theta).
\]
\end{definition}

\begin{lemma}The maximum number of parts one can get when cutting $d$-dimensional space $\mathbb{R}^d$ with $n$ hyperplanes is denoted as $\mathcal{C}(d,n)$, then
\begin{equation}
    \mathcal{C}(d,n) = \left(
    \begin{array}{c}
    n\\
    0
    \end{array}
    \right) +
    \left(
    \begin{array}{c}
    n\\
    1
    \end{array}
    \right) +
    \left(
    \begin{array}{c}
    n\\
    2
    \end{array}
    \right) + \cdots +
    \left(
    \begin{array}{c}
    n\\
    d
    \end{array}
    \right).
    \label{eqn:partition_cells}
\end{equation}
\end{lemma}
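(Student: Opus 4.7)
The plan is to prove the identity by double induction on the number of hyperplanes $n$ and the ambient dimension $d$, using the natural recurrence that arises when one adds a new hyperplane to an existing arrangement.

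First I would establish base cases: $\mathcal{C}(d,0)=1$ (no cut leaves the whole space as one region) and $\mathcal{C}(1,n)=n+1$ (a line cut by $n$ distinct points). Both agree with the claimed sum $\sum_{k=0}^{d}\binom{n}{k}$, since $\binom{0}{0}=1$ and $\binom{n}{0}+\binom{n}{1}=n+1$. I would also observe that to attain the maximum the hyperplanes must be in general position: no two parallel, and no $d{+}1$ of them sharing a common point; otherwise strictly fewer regions appear.

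The inductive core is the recurrence
\[
   \mathcal{C}(d,n) \;=\; \mathcal{C}(d,n-1) \;+\; \mathcal{C}(d-1,n-1).
\]
To see this, fix a generic arrangement of $n-1$ hyperplanes producing $\mathcal{C}(d,n-1)$ regions, then introduce a new hyperplane $H_n$ in general position relative to the previous ones. The trace of the previous $n-1$ hyperplanes on $H_n$ is an arrangement of $n-1$ hyperplanes inside the $(d-1)$-dimensional space $H_n$, hence subdivides $H_n$ into at most $\mathcal{C}(d-1,n-1)$ $(d-1)$-dimensional cells. Each such cell lies in the interior of exactly one region of the old arrangement and splits that region into two, so adding $H_n$ creates exactly $\mathcal{C}(d-1,n-1)$ new regions. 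General position ensures this bound is tight.

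Finally I would verify that the right-hand side $f(d,n):=\sum_{k=0}^{d}\binom{n}{k}$ satisfies the same recurrence and base cases. Using Pascal's identity $\binom{n}{k}=\binom{n-1}{k}+\binom{n-1}{k-1}$,
\[
   f(d,n) \;=\; \sum_{k=0}^{d}\binom{n-1}{k} \;+\; \sum_{k=0}^{d}\binom{n-1}{k-1} \;=\; f(d,n-1) \;+\; f(d-1,n-1),
\]
where the last sum is reindexed via $j=k-1$ and $\binom{n-1}{-1}=0$. Since $f$ and $\mathcal{C}$ agree on base cases and obey the same recurrence, they coincide, proving the lemma. The only subtle step is justifying that general position simultaneously realizes the upper bound at every inductive stage; I would handle this by noting that the set of hyperplane configurations failing general position is a measure-zero closed condition, so a perturbation argument guarantees an arrangement achieving the maximum at all $n$.
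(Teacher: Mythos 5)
Your proposal is correct and follows essentially the same route as the paper: add the last hyperplane, count the new regions via the $(d-1)$-dimensional arrangement induced on it to get the recurrence $\mathcal{C}(d,n)=\mathcal{C}(d,n-1)+\mathcal{C}(d-1,n-1)$, and close by induction. Your write-up is in fact somewhat more careful than the paper's one-paragraph argument, since you state the base cases in both variables, verify explicitly via Pascal's rule that $\sum_{k=0}^{d}\binom{n}{k}$ satisfies the same recurrence, and address the general-position requirement needed for the bound to be tight, which the paper leaves implicit.
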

\begin{proof}Suppose $n$ hyperplanes cut $\mathbb{R}^d$ into $\mathcal{C}(d,n)$ cells, each cell is a convex polyhedron. The $(n+1)$-th hyperplane is $\pi$, then the first $n$ hyperplanes intersection $\pi$ and partition $\pi$ into $\mathcal{C}(d-1,n)$ cells, each cell on $\pi$ partitions a polyhedron in $\mathbb{R}^d$ into $2$ cells, hence we get the formula
\[
    \mathcal{C}(d,n+1) = \mathcal{C}(d,n) + \mathcal{C}(d-1,n).
\]
It is obvious that $\mathcal{C}(2,1)=2$, the formula (\ref{eqn:partition_cells}) can be easily obtained by induction.
\end{proof}

\begin{theorem}[Rectified Linear Complexity of a ReLU DNN] Given a ReLU DNN $N(w_0,\dots,w_{k+1})$, representing PL mappings $\varphi_\theta:\mathbb{R}^{w_0}\to \mathbb{R}^{w_{k+1}}$ with $k$ hidden layers of widths $\{w_i\}_{i=1}^k$, then the linear rectified complexity of $N$ has an upper bound,
\begin{equation}
    \mathcal{N}(N) \le \Pi_{i=1}^{k+1}\mathcal{C}(w_{i-1},w_{i}).
    \label{eqn:DNN_complexity}
\end{equation}
\label{thm:DNN_complexity}
\end{theorem}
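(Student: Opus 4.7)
The plan is to prove the bound by induction on the layer index $i$, tracking the polyhedral decomposition of the input space $\mathbb{R}^{w_0}$ induced by the partial composition $\sigma\circ T_i\circ\cdots\circ\sigma\circ T_1$. At each stage I will show that this decomposition has at most $\prod_{j=1}^{i}\mathcal{C}(w_{j-1},w_j)$ convex cells on each of which the partial composition agrees with some affine map from $\mathbb{R}^{w_0}$ to $\mathbb{R}^{w_i}$.

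The base case $i=1$ is straightforward: on each region where every one of the $w_1$ coordinates of $T_1(x)$ has a fixed sign, $\sigma\circ T_1$ is affine. Each coordinate $(T_1)_j$ is an affine function of $x$ whose vanishing locus is a hyperplane in $\mathbb{R}^{w_0}$, so the $w_1$ hyperplanes partition $\mathbb{R}^{w_0}$ into at most $\mathcal{C}(w_0,w_1)$ convex cells by the preceding lemma.

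For the inductive step, suppose that after the $i$-th ReLU the input space has been subdivided into $P_i\le \prod_{j=1}^{i}\mathcal{C}(w_{j-1},w_j)$ convex cells, and that on each such cell $U$ the partial composition agrees with an affine map $f_U:\mathbb{R}^{w_0}\to\mathbb{R}^{w_i}$. Appending $T_{i+1}$ followed by $\sigma$ introduces $w_{i+1}$ hyperplanes in the image space $\mathbb{R}^{w_i}$, namely the zero sets of the coordinates of $T_{i+1}(y)$, which by the lemma partition $\mathbb{R}^{w_i}$ into at most $\mathcal{C}(w_i,w_{i+1})$ convex cells. The key observation is that the preimage under the affine map $f_U$ of any convex cell is again convex, since preimages of half-spaces under affine maps are half-spaces; in particular each such pullback is connected, so $U$ is refined into at most $\mathcal{C}(w_i,w_{i+1})$ convex subcells, each of which supports an affine restriction of the composition through the $(i+1)$-st ReLU. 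Multiplying gives $P_{i+1}\le\prod_{j=1}^{i+1}\mathcal{C}(w_{j-1},w_j)$.

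Finally, the last transformation $T_{k+1}$ is linear and is not followed by a ReLU, so it introduces no further subdivision, and $\varphi_\theta$ is therefore affine on each of the resulting cells, of which there are at most $\prod_{j=1}^{k}\mathcal{C}(w_{j-1},w_j)\le\prod_{j=1}^{k+1}\mathcal{C}(w_{j-1},w_j)$ (using $\mathcal{C}(w_k,w_{k+1})\ge 1$). Since $\mathcal{N}(\varphi_\theta)$ counts maximal connected regions of affine linearity, adjacent cells of the constructed decomposition may only merge further, yielding the stated upper bound. I expect the main subtlety to be the pullback step: the fact that each cell in the image $\mathbb{R}^{w_i}$ is convex, hence its preimage under $f_U$ is a single connected piece, is exactly what permits the tighter factor $\mathcal{C}(w_i,w_{i+1})$ rather than the weaker $\mathcal{C}(w_0,w_{i+1})$ one would obtain by naively counting hyperplanes directly in $\mathbb{R}^{w_0}$.
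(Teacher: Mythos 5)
Your proof is correct and takes essentially the same route as the paper's: induct over layers, counting at most $\mathcal{C}(w_{i-1},w_i)$ new subdivisions per hidden layer, using the lemma on hyperplane arrangements. Your argument is in fact somewhat more careful than the paper's, since you explicitly justify the per-cell multiplicative factor by observing that the preimage of a convex cell under the affine restriction $f_U$ is convex, hence a single connected region, and you note that the final linear layer $T_{k+1}$ contributes no further subdivision; the paper simply asserts "each cell is at most subdivided into $\mathcal{C}(w_{i-1},w_i)$ polyhedra" without this justification.
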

\begin{proof}
The $i$-th hidden layer computes the mapping $T_i: \mathbb{R}^{w_{i-1}}\to\mathbb{R}^{w_i}$. Each neuron represents a hyperplane in $\mathbb{R}^{w_{i-1}}$, the $w_i$ hyperplanes partition the whole space into $\mathcal{C}(w_{i-1},w_i)$ polyhedra.

The first layer partitions $\mathbb{R}^{w_0}$ into at most $\mathcal{C}(w_0,w_1)$ cells; the second layer further subdivides the cell decomposition, each cell is at most subdivides into $\mathcal{C}(w_1,w_2)$ polyhedra, hence two layers partition the source space into at most $\mathcal{C}(w_0,w_1)\mathcal{C}(w_1,w_2)$. By induction, one can obtain the upper bound of $\mathcal{N}(N)$ as described by the inequality (\ref{eqn:partition_cells}).
\end{proof}

\subsection{Cell Decomposition}
The PL mappings induces cell decompositions of both the ambient space $\mathcal{X}$ and the latent space $\mathcal{F}$. The number of cells is closely related to the rectified linear complexity.

Fix the encoding map $\varphi_\theta$ , let the set of all neurons in the network is denoted as $\mathcal{S}$, all the subsets is denoted as $2^{\mathcal{S}}$.
\begin{definition}[Activated Path]
Given a point $\mathbf{x}\in \mathcal{X}$, the \emph{activated path} of $\mathbf{x}$ consists all the activated neurons when $\varphi_\theta(\mathbf{x})$ is evaluated, and denoted as $\rho(\mathbf{x})$. Then the activated path defines a set-valued function $\rho: \mathcal{X}\to 2^{\mathcal{S}}$.
\end{definition}

\begin{definition}[Cell Decomposition]Fix an encoding map $\varphi_\theta$ represented by a ReLU RNN, two data points $\mathbf{x}_1,\mathbf{x}_2\in \mathcal{X}$ are \emph{equivalent}, denoted as $\mathbf{x}_1\sim\mathbf{x}_2$, if they share the same activated path, $\rho(\mathbf{x}_1)=\rho(\mathbf{x}_2)$. Then each equivalence relation partitions the ambient space $\mathcal{X}$ into cells,
\[
    \mathcal{D}(\varphi_\theta) : \mathcal{X}=\bigcup_\alpha U_\alpha,
\]
each equivalence class corresponds to a cell: $\mathbf{x}_1,\mathbf{x}_2\in U_\alpha$ if and only if $\mathbf{x}_1\sim\mathbf{x}_2$. $\mathcal{D}(\varphi_\theta)$ is called the cell decomposition induced by the encoding map $\varphi_\theta$.
\end{definition}

Furthermore, $\varphi_\theta$ maps the cell decomposition in the ambient space $\mathcal{D}(\varphi_\theta)$ to a cell decomposition in the latent space. Similarly, the composition of the encoding and decoding maps also produces a cell decomposition,  denoted as $\mathcal{D}(\psi_\theta\circ\varphi_\theta)$, which subdivises  $\mathcal{D}(\varphi_\theta)$. Fig.~\ref{fig:autoencoder_pipeline} bottom row shows these cell decompositions.

\subsection{Learning Difficulty}

\begin{definition}[Linear Rectifiable Manifold] Suppose $\Sigma$ is a $m$-dimensional manifold, embedded in $\mathbb{R}^n$, we say $\Sigma$ is linear rectifiable, if there exists an affine map $\varphi: \mathbb{R}^n\to \mathbb{R}^m$, such that the restriction of $\varphi$ on $\Sigma$, $\varphi|_\Sigma:\Sigma\to\varphi(\Sigma)\subset \mathbb{R}^m$, is homeomorphic. $\varphi$ is called the corresponding rectified linear  map of $M$.
\end{definition}

\begin{definition}[Linear Rectifiable Atlas] Suppose $\Sigma$ is a $m$-dimensional manifold, embedded in $\mathbb{R}^n$, $\mathcal{A}=\{(U_\alpha,\varphi_\alpha\}$ is an atlas of $M$. If each chart $(U_\alpha,\varphi_\alpha)$ is linear rectifiable, $\varphi_\alpha:U_\alpha\to \mathbb{R}^m$ is the rectified linear map of $U_\alpha$, then the atlas is called a linear rectifiable atlas of $\Sigma$.
\end{definition}

Given a compact manifold $\Sigma$ and its atlas $\mathcal{A}$, one can select a finite number of local charts $\mathcal{\tilde{A}}=\{(U_i,\varphi_i)\}_{i=1}^n$, $\mathcal{\tilde{A}}$ still covers $\Sigma$. The number of charts of an atlas $\mathcal{A}$ is denoted as $|\mathcal{A}|$.

\begin{definition}[Rectified Linear Complexity of a Manifold] Suppose $\Sigma$ is a $m$-dimensional manifold embedded in $\mathbb{R}^n$, the rectified linear complexity of $\Sigma$ is denoted as $\mathcal{N}(\mathbb{R}^n,\Sigma)$ and defined as,
\begin{equation}
    \mathcal{N}(\mathbb{R}^n,\Sigma) := \min \left\{|\mathcal{A}|~~|\mathcal{A}~\text{is a linear rectifiable~altas~of~}\Sigma\right\}.
\end{equation}
\end{definition}

\subsection{Learnable Condition}

\begin{definition}[Encoding Map] Suppose $M$ is a $m$-dimensional manifold, embedded in $\mathbb{R}^n$, a continuous mapping $\varphi:\mathbb{R}^n\to\mathbb{R}^m$ is called an encoding map of $(\mathbb{R}^n,\Sigma)$, if restricted on $\Sigma$, $\varphi|_\Sigma:\Sigma\to \varphi(\Sigma)\subset \mathbb{R}^m$ is homeomorphic.
\end{definition}

%Suppose $\mu$ is a probability measure defined on $\mathbb{R}^n$, the support of $\mu$ is a $m$-dimensional manifold, denoted as $M(\mu)$.
\begin{theorem}
Suppose a ReLU DNN $N(w_0,\dots,w_{k+1})$ represents a PL mapping $\varphi_\theta:\mathbb{R}^n\to \mathbb{R}^m$,
$\Sigma$ is a $m$-dimensional manifold embedded in $\mathbb{R}^n$. If $\varphi_\theta$ is an encoding mapping of $(\mathbb{R}^n,\Sigma)$, then the rectified linear complexity of $N$ is no less that the rectified linear complexity of $(\mathbb{R}^n,\Sigma)$,
\[
    \mathcal{N}(\mathbb{R}^n,\Sigma) \le \mathcal{N}(\varphi_\theta) \le \mathcal{N}(N).
\]
\label{thm:encodable_condition}
\end{theorem}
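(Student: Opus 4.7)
The right-hand inequality $\mathcal{N}(\varphi_\theta)\le\mathcal{N}(N)$ is immediate from the very definition of $\mathcal{N}(N)=\max_\theta \mathcal{N}(\varphi_\theta)$: it holds for every particular choice of weights $\theta$. All the content of the statement lies in the left-hand inequality $\mathcal{N}(\mathbb{R}^n,\Sigma)\le\mathcal{N}(\varphi_\theta)$, and the plan is to exhibit a linear rectifiable atlas of $\Sigma$ with at most $\mathcal{N}(\varphi_\theta)$ charts.

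Set $N=\mathcal{N}(\varphi_\theta)$ and let $\{P_k\}_{k=1}^N$ be the maximal closed convex polyhedra of the PL decomposition of $\mathbb{R}^n$ induced by $\varphi_\theta$, so that $\mathbb{R}^n=\bigcup_{k=1}^N P_k$ and $\varphi_\theta|_{P_k}$ coincides with a globally defined affine map $T_k:\mathbb{R}^n\to\mathbb{R}^m$. The key observation is that each $T_k$ is a rectified linear map for the chart $\Sigma\cap P_k$: by hypothesis $\varphi_\theta|_\Sigma:\Sigma\to\varphi_\theta(\Sigma)$ is a homeomorphism, and any restriction of a homeomorphism to a subspace is a topological embedding, so $\varphi_\theta|_{\Sigma\cap P_k}=T_k|_{\Sigma\cap P_k}$ is a homeomorphism onto its image $\varphi_\theta(\Sigma\cap P_k)\subset\mathbb{R}^m$.

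To upgrade the closed cover $\{\Sigma\cap P_k\}_{k=1}^N$ to an honest atlas in the sense of Definition of linear rectifiable atlas, I would slightly thicken each cell to an open neighborhood $U_k\subset\Sigma$ of $\Sigma\cap P_k$ on which $T_k$ remains injective, and use $\varphi_k:=T_k|_{U_k}$ as the chart. Local injectivity of the thickening follows from continuity of $T_k$ together with injectivity on the closed piece $\Sigma\cap P_k$; global injectivity after thickening can be arranged by choosing the thickenings small enough, using the global injectivity of $\varphi_\theta|_\Sigma$ and, on compact portions, a standard Lebesgue-number argument. The resulting collection $\{(U_k,\varphi_k)\}_{k=1}^N$ is a linear rectifiable atlas with $|\mathcal{A}|\le N$, and therefore $\mathcal{N}(\mathbb{R}^n,\Sigma)\le N=\mathcal{N}(\varphi_\theta)$, completing the chain.

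The step I expect to be the main obstacle is precisely the open thickening in the last paragraph: the cells $P_k$ may be unbounded, they share lower-dimensional faces where three or more cells meet, and near such faces the globally affine extensions $T_k$ on different cells disagree. Producing open thickenings $U_k$ that are simultaneously pairwise compatible in the sense of a manifold atlas, that keep $T_k|_{U_k}$ injective, and that do not inflate the count beyond $N$ is the delicate point. A clean way to sidestep it is to observe that the PL structure on $\mathbb{R}^n$ restricts to a PL structure on $\Sigma$ refining the closed cover $\{\Sigma\cap P_k\}$; one can then assign each open face of $\Sigma$ to a single incident $P_k$ and take $U_k$ to be the union of interiors of the faces assigned to $k$, which keeps the chart count at most $N$ while making the atlas well defined.
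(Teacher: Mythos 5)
Your proof follows essentially the same route as the paper's: you take the polyhedral cell decomposition $\mathcal{D}(\varphi_\theta)$ of $\mathbb{R}^n$, observe that on each cell $\varphi_\theta$ restricts to a globally defined affine map that is a homeomorphism on $\Sigma\cap P_k$ (since $\varphi_\theta|_\Sigma$ is a homeomorphism by hypothesis), use these restrictions as a linear rectifiable atlas, and conclude $\mathcal{N}(\mathbb{R}^n,\Sigma)\le\mathcal{N}(\varphi_\theta)$; the inequality $\mathcal{N}(\varphi_\theta)\le\mathcal{N}(N)$ is by definition. This is exactly the paper's argument, stated in one line as ``$\mathcal{A}:=\{(D_i,\varphi_\theta|_{D_i})\mid D_i\cap\Sigma\neq\emptyset\}$ forms a linear rectifiable atlas of $\Sigma$.''

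Where you go beyond the paper is in flagging a genuine technical wrinkle: the cells $P_k$ are closed convex polyhedra, so $\{\Sigma\cap P_k\}$ is a closed cover, whereas the paper's own definition of a chart requires $U_\alpha$ to be open in $\Sigma$. The paper silently treats the closed pieces as charts; you notice this and propose two fixes (small open thickenings, or assigning interiors of faces of the induced PL stratification to a single incident cell). Neither fix is fully carried out in your write-up, and you correctly identify the delicate points (unbounded cells, shared lower-dimensional faces, keeping injectivity after thickening). The thickening route does work: injectivity of $T_k$ on a small neighborhood of $\Sigma\cap P_k$ follows because $\varphi_\theta$ agrees with $T_k$ on an honest open neighborhood of the relative interior of $P_k$, and near the boundary faces one can use the global injectivity of $\varphi_\theta|_\Sigma$ to absorb boundary points into whichever neighbor's thickened chart contains them; on a compact $\Sigma$ a Lebesgue-number argument finishes it, and for noncompact $\Sigma$ one exhausts by compacta. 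Since each thickened chart is still associated to exactly one cell, the chart count does not exceed $\mathcal{N}(\varphi_\theta)$. So your proof is the paper's proof plus the missing topological housekeeping, and it is correct modulo that unfinished but repairable last step.
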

\begin{proof}
The ReLU DNN computes the PL mapping $\varphi_\theta$, suppose the corresponding cell decomposition of $\mathbb{R}^n$ is
\[
   \mathcal{D}(\varphi_\theta): \mathbb{R}^n = \bigcup_{i=1}^k U_i,
\]
where each $U_i$ is a convex polyhedron, $k\le \mathcal{N}(\varphi_\theta)$. If $\varphi_\theta$ is an encoding map of $\Sigma$, then
\[
    \mathcal{A}:=\{(D_i,\varphi_\theta|_{D_i})| D_i \cap \Sigma\neq \emptyset\}
\]
form a linear rectifiable atlas of $\Sigma$. Hence from the definition of rectified linear complexity of an ReLU DNN and the manifold, we obtain
\[
    \mathcal{N}(\mathbb{R}^n, \Sigma)\le \mathcal{N}(\varphi_\theta) \le \mathcal{N}(\varphi).
\]
\end{proof}

The encoding map $\varphi_\theta:\mathcal{X}\to \mathcal{F}$ is required to be homeomorphic, this adds strong topological constraints to the manifold $\Sigma$. For example, if $\Sigma$ is a surface, $\mathcal{F}$ is $\mathbb{R}^2$, then $\Sigma$ must be a genus zero surface with boundaries. In general, assume $\varphi_\theta(\Sigma)$ is a simply connected domain in $\mathcal{F}=\mathbb{R}^m$, then $\Sigma$ must be a $m$-dimensional topological disk. The topological constraint implies that autoencoder can only learn manifolds with simple topologies, or a local chart of the whole manifold.

On the other hand, the geometry and the embedding of $\Sigma$ determines the linear rectifiability of $(\Sigma,\mathbb{R}^n)$.

\begin{lemma}Suppose a $n$ dimensional manifold $\Sigma$ is embedded in $\mathbb{R}^{n+1}$,
\begin{diagram}
M&\rTo^{G}& \mathbb{S}^n &\rTo^{p} & \mathbb{RP}^n
\end{diagram}
where $G:\Sigma\to\mathbb{S}^n$ is the Gauss map, $\mathbb{RP}^n$ is the real projective space, the projection $p:\mathbb{S}^n\to\mathbb{RP}^n$ maps antipodal points to the same point, if $p\circ G(\Sigma)$ covers the whole $\mathbb{RP}^n$, then $\Sigma$ is not linear rectifiable.
\end{lemma}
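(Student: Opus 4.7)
The plan is to argue by contradiction: suppose $\Sigma$ is linear rectifiable, so there is an affine map $\varphi(x)=Ax+b$ with $A:\mathbb{R}^{n+1}\to\mathbb{R}^n$ whose restriction $\varphi|_\Sigma$ is a homeomorphism onto its image. The idea is to extract a projection direction $v$ from $A$, use the Gauss map hypothesis to locate a point of $\Sigma$ whose tangent hyperplane contains $v$, and derive a contradiction from the singularity of the differential there.

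First I would show that $A$ must have full rank $n$. By Brouwer's invariance of domain, a continuous injection of the $n$-manifold $\Sigma$ into $\mathbb{R}^n$ has open image; if the rank of $A$ were smaller, the image would sit inside a proper affine subspace of $\mathbb{R}^n$ and could not be open. Hence $\ker A$ is a one-dimensional subspace, spanned by some unit vector $v\in\mathbb{R}^{n+1}$.

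Next I would use $p\circ G(\Sigma)=\mathbb{RP}^n$ to locate a point $q\in\Sigma$ with $v\in T_q\Sigma$. The equator $v^\perp\cap\mathbb{S}^n$ is nonempty and projects under $p$ to a copy of $\mathbb{RP}^{n-1}\subset\mathbb{RP}^n$. Surjectivity of $p\circ G$ yields some $q\in\Sigma$ with $p(G(q))\in p(v^\perp\cap\mathbb{S}^n)$, i.e.\ $G(q)=\pm u$ for some unit $u\perp v$. Since the tangent hyperplane $T_q\Sigma$ equals the orthogonal complement $G(q)^\perp$, this gives $v\in T_q\Sigma$.

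Finally, the differential $d\varphi_q:T_q\Sigma\to\mathbb{R}^n$ coincides with $A|_{T_q\Sigma}$, and $v\in T_q\Sigma\cap\ker A$ makes its kernel nontrivial. As a singular linear map between two $n$-dimensional vector spaces, $d\varphi_q$ cannot be invertible, so $\varphi|_\Sigma$ fails to be a local diffeomorphism at $q$, contradicting the assumption that it is a homeomorphism onto an open subset of $\mathbb{R}^n$. The hardest part is precisely this concluding step: passing from ``singular differential'' to ``not a topological homeomorphism'' is not automatic (the cubic $y=x^3$ in $\mathbb{R}^2$ reminds us that an isolated critical point of a projection can coexist with topological injectivity). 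Resolving this requires either reinforcing the definition of linear rectifiability to include the local diffeomorphism condition, or a structural argument on the critical locus $\{q\in\Sigma:v\in T_q\Sigma\}$, which by the hypothesis is the full preimage under $p\circ G$ of the equator $p(v^\perp\cap\mathbb{S}^n)$ and is therefore too large and too robust to be compatible with a topological embedding into $\mathbb{R}^n$.
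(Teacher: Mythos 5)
Your argument follows the paper's outline through the key step: you extract a projection direction $v$ and use surjectivity of $p\circ G$ to produce a point $q\in\Sigma$ with $v\in T_q\Sigma$. Your preliminary rank argument (invariance of domain forces $A$ to have rank $n$, hence a one-dimensional kernel) is a useful elaboration that the paper leaves implicit when it speaks of ``projecting $\Sigma$ onto $\mathbb{R}^n$ along $v$.'' You are also right to flag that the argument is not finished: a singular differential $d\varphi_q$ does not, by itself, contradict $\varphi|_\Sigma$ being a topological homeomorphism, and your $y=x^3$ example pinpoints exactly why. What you are missing is the move the paper actually uses to close the proof, which does not go through the differential at all. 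The paper observes that $q+t v$ is a line tangent to $\Sigma$ at $q$, and asserts that an infinitesimal parallel displacement of this line meets $\Sigma$ at two distinct nearby points; those two points are identified by the projection along $v$, contradicting injectivity of $\varphi|_\Sigma$ directly. This route attacks injectivity itself rather than the weaker property of being a local diffeomorphism, so it has the right logical shape, whereas the singular-differential route cannot be repaired without strengthening the definition, as you yourself note.

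For completeness, the paper's ``shift the line and get two intersection points'' step is itself stated without justification and is false verbatim in the inflection case you raise: if $\Sigma$ near $q$ looks like $y=x^3$ in the plane spanned by $v$ and $G(q)$, a slightly shifted parallel line meets $\Sigma$ once, not twice. Making the paper's step rigorous requires using the large supply of candidate points that surjectivity of $p\circ G$ provides and selecting $q$ so that $\Sigma$ lies locally on one side of its tangent hyperplane (for instance a local extremum of a height function $\langle\cdot,w\rangle$ with $w\perp v$, when one exists); then the shifted line either hits $\Sigma$ twice or $\Sigma$ already contains a whole segment in direction $v$, and either way the projection is non-injective. So your instinct that something is missing is sound and in fact applies to the paper's written argument too, but the intersection-count idea, not the singular-differential idea, is the correct way to finish.
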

\begin{proof} Given any unit vector $\mathbf{v}\in\mathbb{R}^{n+1}$, all the unit vectors orthogonal to $\mathbf{v}$ form a sphere $\mathbb{S}^{n-2}(\mathbf{v})$, then $p(\mathbb{S}^{n-2}(\mathbf{v}))\cap \mathbb{RP}^{n}\neq \emptyset$, therefore there is a point $q\in \Sigma$, $\mathbf{v}$ is in the tangent space at $q$. Line $q + t\mathbf{v}$ is tangent to $\Sigma$, by shifting the line by an infinitesimal amount, the line intersects $\Sigma$ at two points. This shows there is no linear mapping, which projects $\Sigma$ onto $\mathbb{R}^n$ along $\mathbf{v}$. Because $\mathbf{v}$ is arbitrary, $\Sigma$ is not linear rectifiable.
\end{proof}

\setlength{\tabcolsep}{6pt}
\begin{figure}[h!]
\begin{center}
\begin{tabular}{cccc}
\includegraphics[height=0.22\textwidth]{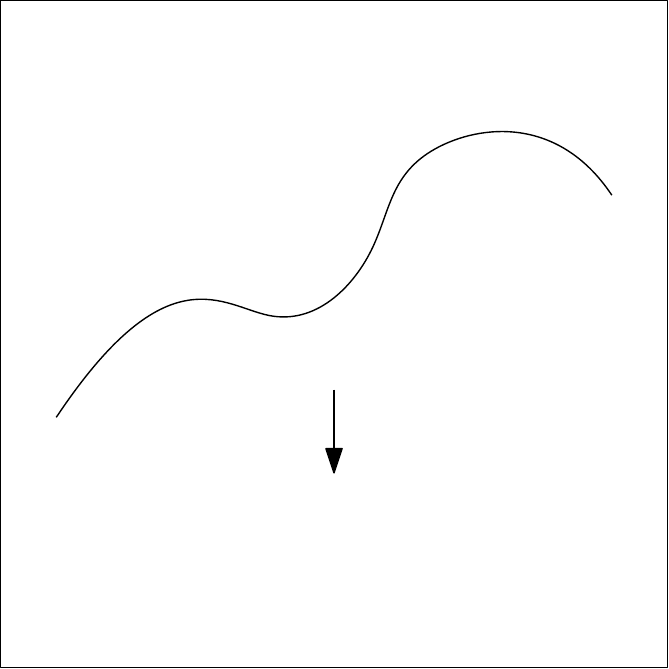}&
\includegraphics[height=0.22\textwidth]{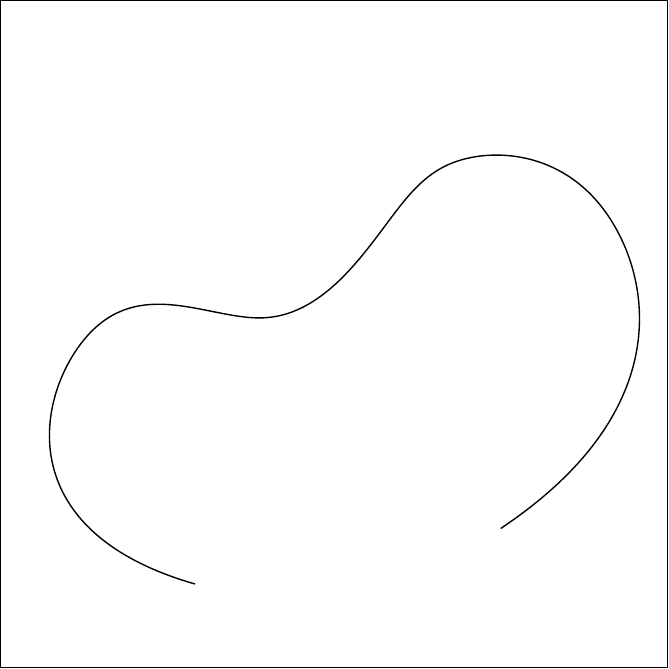}&
\includegraphics[height=0.22\textwidth]{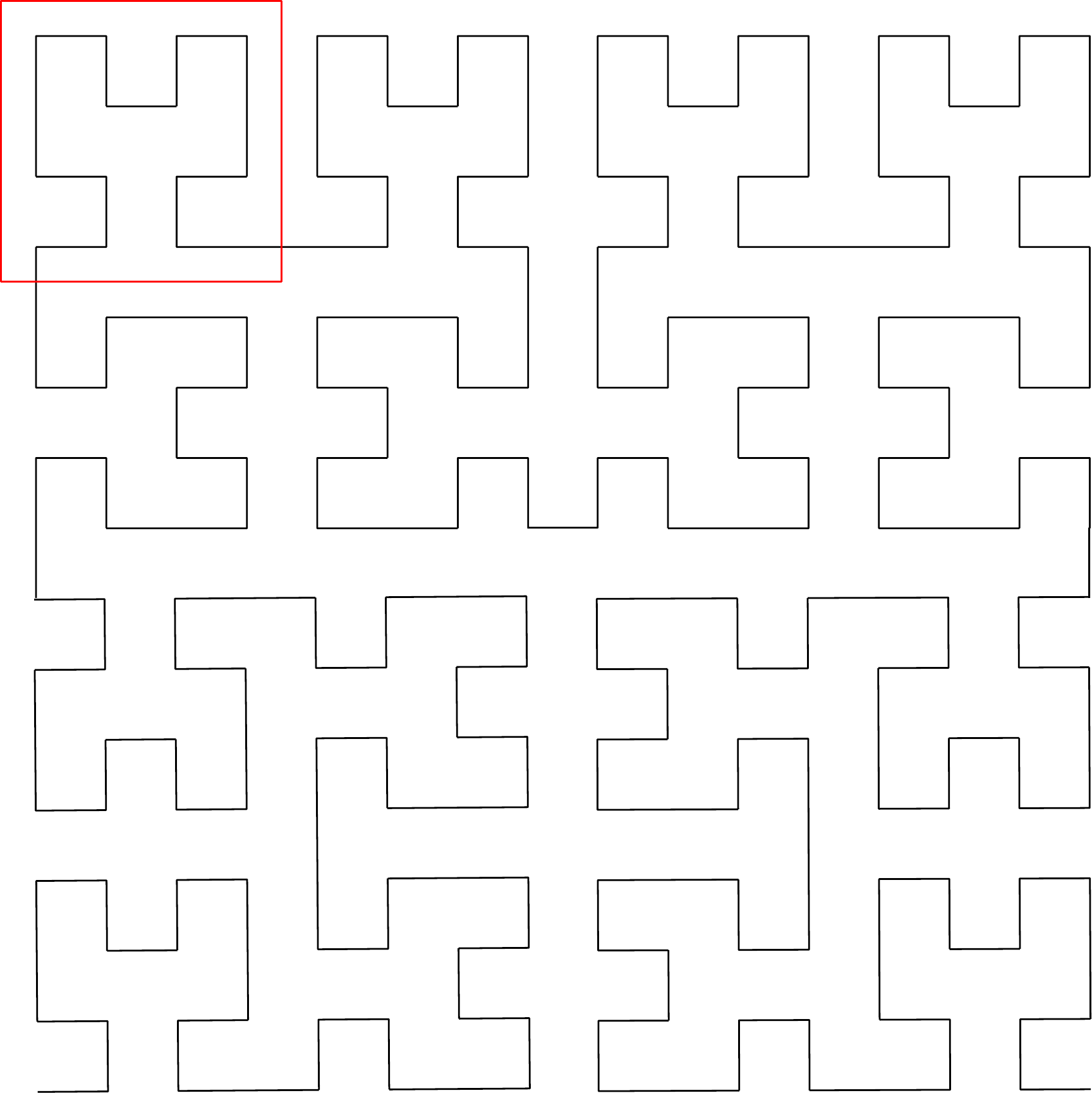}&
\includegraphics[height=0.22\textwidth]{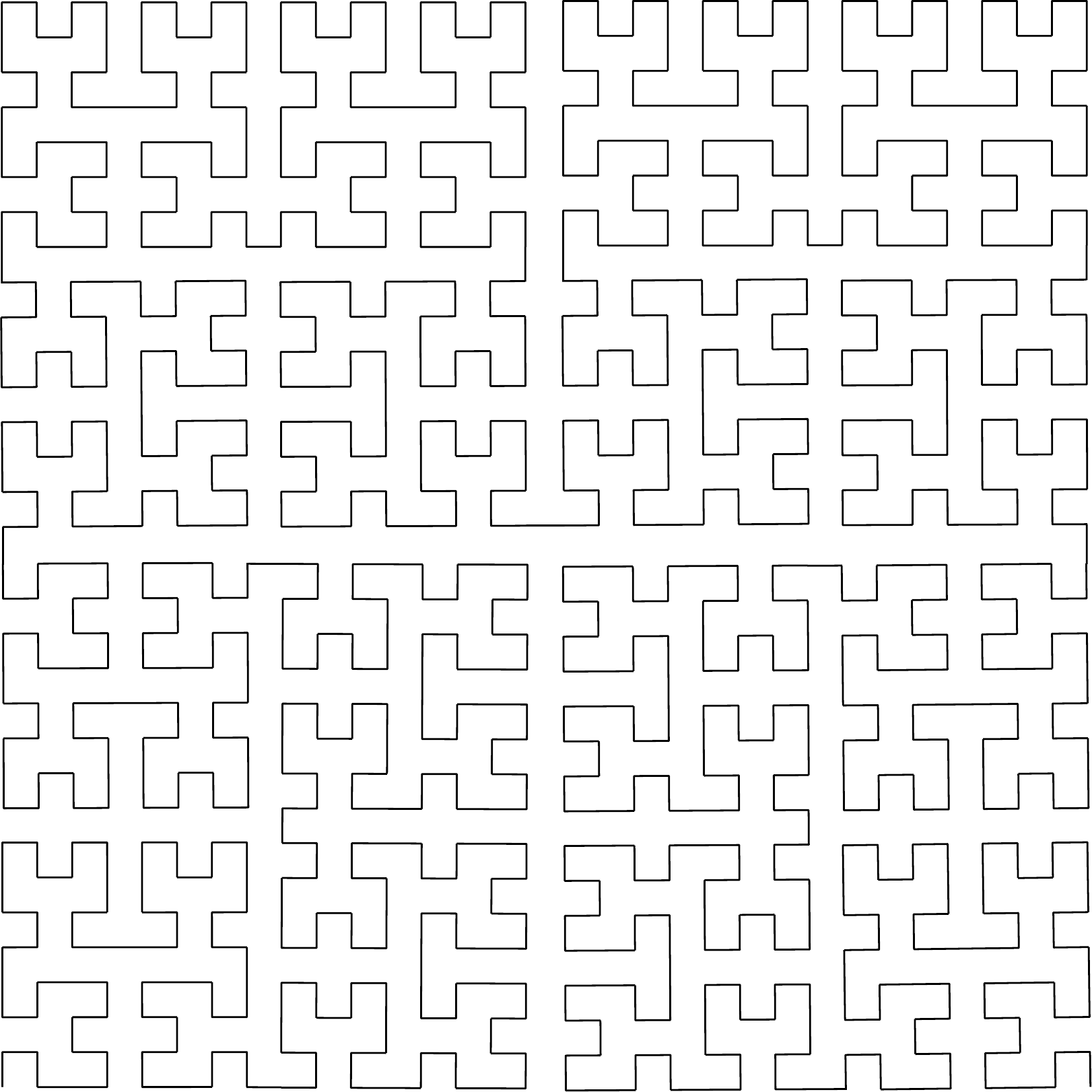}\\
a. linear rectifiable & b. non-linear-rectifiable  & c. $C_1$ Peano curve & d. $C_2$ Peano curve\\
\end{tabular}
\end{center}
\caption{Linear rectifiable and non-linear-rectifiable curves.}
\label{fig:peano_curves}
\end{figure}

\begin{theorem}Given any ReLU deep neural network $N(w_0,w_1,\dots,w_k,w_{k+1})$, there is a manifold $\Sigma$ embedded in $\mathbb{R}^{w_0}$, such that $\Sigma$ can not be encoded by $N$.
\label{thm:non_encodable}
\end{theorem}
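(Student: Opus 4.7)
The strategy combines the two preceding theorems. By Theorem~\ref{thm:DNN_complexity}, the integer $M := \prod_{i=1}^{k+1}\mathcal{C}(w_{i-1},w_i)$ is a finite upper bound for $\mathcal{N}(N)$. By the contrapositive of Theorem~\ref{thm:encodable_condition}, any manifold $\Sigma\subset\mathbb{R}^{w_0}$ with $\mathcal{N}(\mathbb{R}^{w_0},\Sigma)>M$ cannot be encoded by $N$. The proof therefore reduces to exhibiting, for the given $M$, a concrete embedded manifold in $\mathbb{R}^{w_0}$ whose rectified linear complexity strictly exceeds $M$.

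My witness is a tightly wound planar spiral. Embed $\mathbb{R}^2$ into $\mathbb{R}^{w_0}$ as the first two coordinates, fix a small $\eps>0$, and put
\[
\Sigma_j := \bigl\{\bigl((1+\eps t)\cos t,\,(1+\eps t)\sin t,\,0,\ldots,0\bigr) : t\in[0,2\pi j]\bigr\}.
\]
This is a smooth embedded $1$-manifold with boundary in $\mathbb{R}^{w_0}$. When $w_{k+1}>1$, I would replace $\Sigma_j$ by $\Sigma_j\times[0,1]^{w_{k+1}-1}$ in fresh coordinates; this raises the intrinsic dimension to $w_{k+1}$ while preserving the lower bound on rectified linear complexity, since any linear rectifiable atlas of the product restricts to one of the spiral slice. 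The key claim is $\mathcal{N}(\mathbb{R}^{w_0},\Sigma_j)\to\infty$ as $j\to\infty$, so picking $j$ large enough finishes the proof.

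To justify this claim, fix any linear rectifiable atlas $\{(U_\alpha,\varphi_\alpha)\}$ and, for each chart, let $\beta_\alpha\in\mathbb{R}^2$ denote the projection of the linear part of $\varphi_\alpha$ onto the spiral plane (charts with $\beta_\alpha=0$ act constantly on $\Sigma_j$ and are discarded). For $\eps$ small, the derivative $\tfrac{d}{dt}\varphi_\alpha(\gamma(t))$ has exactly $2j$ simple sign changes on $[0,2\pi j]$, cutting $\Sigma_j$ into $2j+1$ maximal monotone arcs of $\varphi_\alpha$ whose shared boundary points are genuine local extrema; since $U_\alpha$ is open it can contain no such critical point, so $U_\alpha\cap\Sigma_j$ is a disjoint union of sub-arcs each inside a single monotone arc, whose $\varphi_\alpha$-images are forced by injectivity to be pairwise disjoint sub-intervals of $[-|\beta_\alpha|,|\beta_\alpha|]$. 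Integrating the resulting image-length bound, normalizing by $|\beta_\alpha|$, and using $\gamma'(t)\approx(-\sin t,\cos t)$, one obtains
\[
\int_{U_\alpha\cap\Sigma_j}\bigl|\sin(t-\phi_\alpha)\bigr|\,dt \;\le\; 2,
\]
where $\phi_\alpha$ is the polar angle of $\beta_\alpha$. Combined with the identity $\int_0^{2\pi j}|\sin(t-\phi)|\,dt = 4j$ valid for every $\phi$, together with the coverage condition $\bigcup_\alpha U_\alpha = \Sigma_j$, a pigeonhole argument over the directions $\phi_\alpha\in\mathbb{S}^1$ forces the atlas size to grow without bound in $j$.

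The main obstacle is the final pigeonhole step, which must cope with charts whose $\phi_\alpha$ genuinely differ. The cleanest resolution is to partition $\mathbb{S}^1$ into a fixed finite collection of angular sectors, bin the charts by the sector of their $\phi_\alpha$, and observe that within each sector the functions $|\sin(t-\phi_\alpha)|$ and $|\sin(t-\phi^\star)|$, for a single reference direction $\phi^\star$ in the sector, are uniformly comparable up to a constant; the single-direction integral identity above then yields a linear-in-$j$ lower bound on the number of charts whose direction lies in each sector, and hence on the total atlas size. With this in hand, choosing $j$ sufficiently large gives $\mathcal{N}(\mathbb{R}^{w_0},\Sigma_j)>M$, and the contrapositive of Theorem~\ref{thm:encodable_condition} delivers the required non-encodable manifold.
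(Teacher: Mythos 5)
Your overall strategy is exactly the paper's: invoke Theorem~\ref{thm:DNN_complexity} for a finite upper bound $M$ on $\mathcal{N}(N)$, invoke Theorem~\ref{thm:encodable_condition} contrapositively, and exhibit an embedded manifold with rectified linear complexity exceeding $M$. Where you depart is the choice of witness. The paper uses a recursively constructed Peano curve $C_n$ and argues that each of its $4^{n+1}$ disjoint ``units'' is non--linear-rectifiable (this is where the Gauss-map lemma is implicitly doing work), so $\mathcal{N}(\mathbb{R}^2,C_n)\ge 4^{n+1}$; you use a spiral with $j$ windings and try to extract a lower bound on the atlas size by an integral estimate. The spiral is a perfectly reasonable witness --- indeed it matches the motivating example the paper gives in Fig.~\ref{fig:spiral} --- and the first half of your estimate, that any chart satisfies $\int_{U_\alpha\cap\Sigma_j}|\sin(t-\phi_\alpha)|\,dt\le 2(1+O(\eps j))$ by disjointness of image intervals, is correct.

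The gap is the ``uniform comparability'' step. For two nearby directions $\phi_\alpha$ and $\phi^\star$, the ratio $|\sin(t-\phi^\star)|/|\sin(t-\phi_\alpha)|$ is unbounded in $t$ (it blows up at $t=\phi_\alpha\ (\mathrm{mod}\ \pi)$), so the functions are \emph{not} comparable up to a multiplicative constant on any sector of fixed width, and the proposed pigeonhole over a fixed finite set of sectors does not close. The fix is to bound the \emph{parameter length} each chart can cover directly: each component $A_i$ of $U_\alpha\cap\Sigma_j$ lies in a single monotone piece of $\cos(t-\phi_\alpha)$, and since its image length satisfies $|J_i|\gtrsim |A_i|^2$ near a critical point and $|J_i|\gtrsim |A_i|$ elsewhere, one obtains from $\sum_i|J_i|\le 2R$ (and at most two ``flat'' arcs per monotone piece, hence $O(j)$ of them) that $|U_\alpha\cap[0,2\pi j]|=O(\sqrt{j})$. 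Since the charts cover the full length $2\pi j$, the atlas must have $\Omega(\sqrt{j})$ charts, which still diverges. Alternatively, you could follow the paper more closely and observe that any sub-arc of $\Sigma_j$ subtending angle exceeding $\pi$ is non--linear-rectifiable by the Gauss-map lemma, and then argue the atlas must separate the $j$ disjoint such arcs. One other small point: for $w_{k+1}>1$, the restriction of a chart of $\Sigma_j\times[0,1]^{w_{k+1}-1}$ to the slice $\Sigma_j\times\{0\}$ is an injective affine map into $\mathbb{R}^{w_{k+1}}$, not into $\mathbb{R}^1$, so it is not automatically a linear-rectifiable chart of the slice in the sense of the paper's definition; the product construction is the right idea but needs one extra line of justification (e.g., a generic further projection $\mathbb{R}^{w_{k+1}}\to\mathbb{R}$ will remain injective on the image arc, or just re-run the integral bound on the product).
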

\begin{proof} First, we prove the simplest case. When $(w_0,w_{k+1})=(2,1)$, we can construct space filling Peano curves,  as shown in Fig.~\ref{fig:peano_curves}. Suppose $C_1$ is shown in the left frame, we make $4$ copies of $C_1$, by translation, rotation, reconnection and scaling to construct $C_2$, as shown in the right frame. Similarly, we can construct all $C_k$'s. The red square shows one unit, $C_1$ has $16$ units, $C_n$ has $4^{n+1}$ units. Each unit is not rectifiable, therefore
\[
    \mathcal{N}(\mathbb{R}^2,C_n) \ge 4^{n+1}.
\]
We can choose $n$ big enough, such that $4^{n+1} > \mathcal{N}(N)$, then $C_n$ can not be encoded by $N$.

Similarly, for any $w_0$ and $w_{k+1}=1$, we can construct Peano curves to fill $\mathbb{R}^{w_0}$, which can not be encoded by $N$. The Peano curve construction can be generalized to higher dimensional manifolds by direct product with unit intervals.
\end{proof}

\section{Control Induced Measure}
\label{sec:measure_control}

\setlength{\tabcolsep}{0pt}
\begin{figure}[h!]
\begin{center}
\begin{tabular}{ccc}
\includegraphics[height=0.36\textwidth]{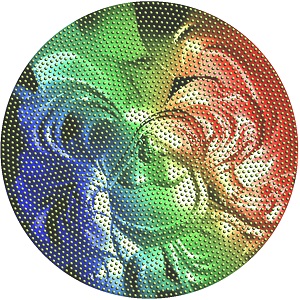}&
\includegraphics[height=0.36\textwidth]{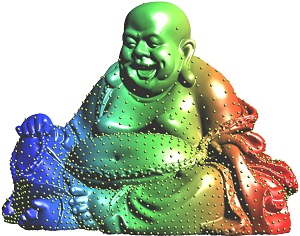}\\
\includegraphics[height=0.36\textwidth]{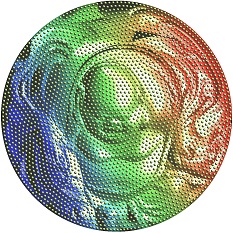}&
\includegraphics[height=0.36\textwidth]{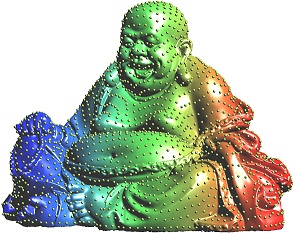}\\
\end{tabular}
\end{center}
\caption{Control distributions by optimal mass transportation.}
\label{fig:OMT}
\end{figure}

In generative models, such as VAE \cite{Kingma2013AutoEncodingVB} or GAN \cite{arjovsky2017wasserstein}, the probability measure in the latent space induced by the encoding mapping $(\varphi_\theta)_*\mu$ is controlled to be simple distributions, such as Gaussian or uniform, then in the generating process, we can sample from the simple distribution in the latent space, and use the decoding map to produce a sample in the ambient space.

The buddha surface $\Sigma$ is conformally mapped onto the planar unit disk $\varphi:\Sigma\to\mathbb{D}$ using the Ricci flow method \cite{DBLP:journals/pami/ZengSG10}, the uniform distribution on the parameter domain induces a non-uniform distribution on the surface, as shown in the top row of Fig.~\ref{fig:OMT}. Then by composing with an optimal mass transportation map $\psi:\mathbb{D}\to\mathbb{D}$ using the algorithm in \cite{DBLP:journals/pami/SuWSZS0G15}, one obtain an area-preserving mapping $\psi\circ\varphi:\Sigma\to\mathbb{D}$, the image is shown in the bottom row of Fig.~\ref{fig:OMT} left frame. Then we uniformly sample the planar disk to get the samples $\mathbf{Z}=\{\mathbf{z}_1,\dots,\mathbf{z}_k\}$, then pull them back on to $\Sigma$ by $\psi\circ\varphi$, $\mathbf{X}=\{\mathbf{x}_1,\dots,\mathbf{x}_k\}$, $x_i=(\psi\circ\varphi)^{-1}(z_i)$. Because $\psi\circ\varphi$ is area-preserving, $\mathbf{Z}$ is uniformly distributed on the disk, $\mathbf{X}$ is uniformly distributed on $\Sigma$ as shown in the bottom row of Fig.~\ref{fig:OMT} right frame.

\paragraph*{Optimal Mass Transportation} The optimal transportation theory can be found in Villani's classical books \cite{vil}\cite{villani2008optimal}. Suppose $\nu=(\varphi_\theta)_*\mu$ is the induced probability in the latent space with a convex support $\Omega\subset \mathcal{F}$, $\zeta$ is the simple distribution, e.g. the uniform distribution on $\Omega$. A mapping $T:\Omega\to\Omega$ is measure-preserving if $T_*\nu = \zeta$. Given the transportation cost between two points $c:\Omega\times\Omega\to\mathbb{R}$, the transportation cost of $T$ is defined as
\[
    \mathcal{E}(T):= \int_{\Omega} c(\mathbf{x},T(\mathbf{x})) d\nu(\mathbf{x}).
\]
 The \emph{Wasserstein distance} between $\nu$ and $\zeta$ is defined as
\[
    \mathcal{W}(\nu,\zeta):= \inf_{T_*\nu=\zeta} \mathcal{E}(T).
\]
The measure-preserving map $T$ that minimizes the transportation cost is called the \emph{optimal mass transportation map}.

Kantorovich proved that the Wasserstein distance can be represented as
\[
    \mathcal{W}(\nu,\zeta):= \max_{f} \int_\Omega f d\nu + \int_\Omega f^c d\zeta
\]
where $f:\Omega\to \mathbb{R}$ is called the Kontarovhich potential, its c-transform
\[
    f^c(\mathbf{y}):= \inf_{\mathbf{x}\in\Omega}{c(\mathbf{x},\mathbf{y})-f(\mathbf{x})}.
\]

In WGAN, the discriminator computes the generator computes the decoding map $\psi_\theta:\mathcal{F}\to\mathcal{X}$, the discriminator computes the Wasserstein distance between $(\psi_\theta)_*\zeta$ and $\mu$. If the cost function is chosen to be the $L^1$ norm, $c(\mathbf{x},\mathbf{y})=|\mathbf{x}-\mathbf{y}|$, $f$ is 1-Lipsitz, then $f^c=-f$, the discriminator computes the Kontarovich potential, the generator computes the optimal mass transportation map, hence WGAN can be modeled as an optimization
\[
    \min_\theta \max_f \int_\Omega f \circ \psi_\theta(\mathbf{z}) d\zeta(\mathbf{z}) - \int_\mathcal{X} f(x) d\mu(x).
\]
The competition between the discriminator and  the generator leads to the solution.

If we choose the cost function to be the $L^2$ norm, $c(\mathbf{x},\mathbf{y})=\frac{1}{2}|\mathbf{x}-\mathbf{y}|^2$, then the computation can be greatly simplified. Briener's theorem \cite{brenier} claims that there exists a convex function $u:\Omega\to \mathbb{R}$, the so-called Brenier's potential, such that its gradient map $\nabla u:\Omega\to\Omega$ gives the optimal mass transportation map. The Brenier's potential satisfies the Monge-Ampere equation
\[
    \text{det}\left(\frac{\partial^2 u }{\partial x_i \partial x_j}\right) = \frac{\nu(\mathbf{x})}{\zeta(\nabla u(\mathbf{x}))}.
\]
Geometrically, the Monge-Ampere equation can be understood as solving Alexandroff problem: finding a convex surface with prescribed Gaussian curvature. A practical algorithm based on variational principle can be found in \cite{Gu_AJM_2016}.
The Brenier's potential and the Kontarovich's potential are related by the closed form
\begin{equation}
    u(\mathbf{x})=\frac{1}{2}|\mathbf{x}|^2 - f(\mathbf{x}).
    \label{eqn:relation}
\end{equation}
Eqn.(\ref{eqn:relation}) shows that: the generator computes the optimal transportation map $\nabla u$, the discriminator computes the Wasserstein distance by finding Kontarovich's potential $f$; $u$ and $f$ can be converted to each other, hence the competition between the generator and the discriminator is unnecessary, the two deep neural networks for the generator and the discriminator are redundant.

\paragraph*{Autoencoder-OMT model}

\setlength{\tabcolsep}{1mm}
\begin{figure}[h!]
\begin{center}
\begin{tabular}{c}
\epsfig{file=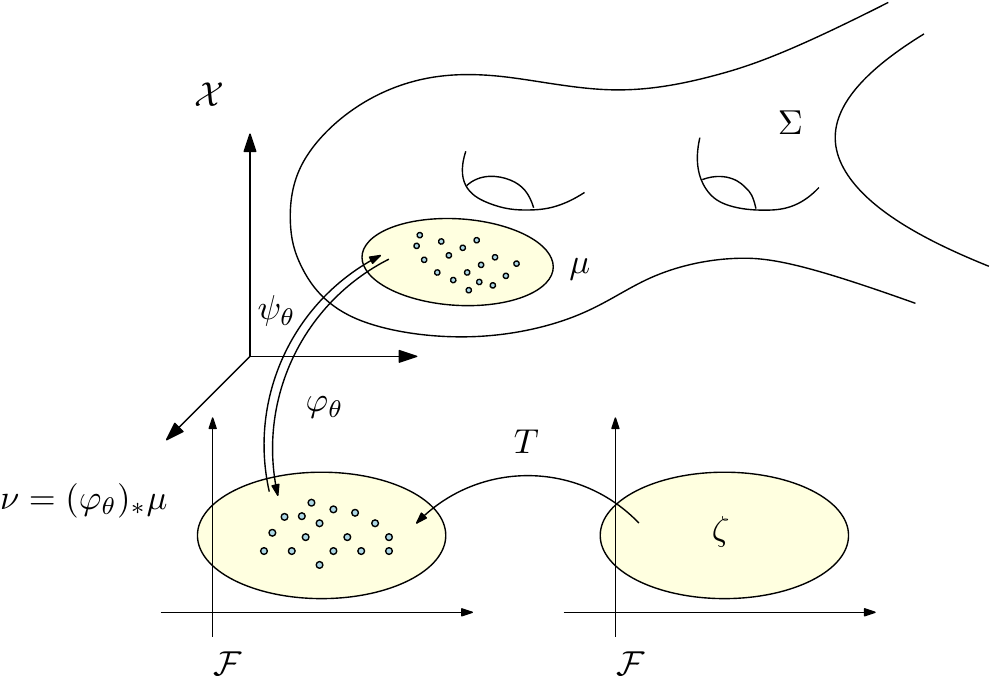, height=0.4\textwidth}
\end{tabular}
\end{center}
\caption{Autoencoder combined with a optimal transportation map.\label{fig:AE_OMT}}
\end{figure}
As shown in Fig.~\ref{fig:AE_OMT}, we can use autoencoder to realize encoder $\varphi_\theta:\mathcal{X}\to\mathcal{F}$ and decoder $\psi_\theta:\mathcal{F}\to\mathcal{X}$, use OMT in the latent space to realize probability transformation $T:\mathcal{F}\to\mathcal{F}$, such that
\[
    T_*\zeta = (\varphi_\theta)_*\mu.
\]
We call this model as OMT-autoencoder.

\setlength{\tabcolsep}{1mm}
\begin{figure}[h!]
\begin{center}
\begin{tabular}{cc}
\epsfig{file=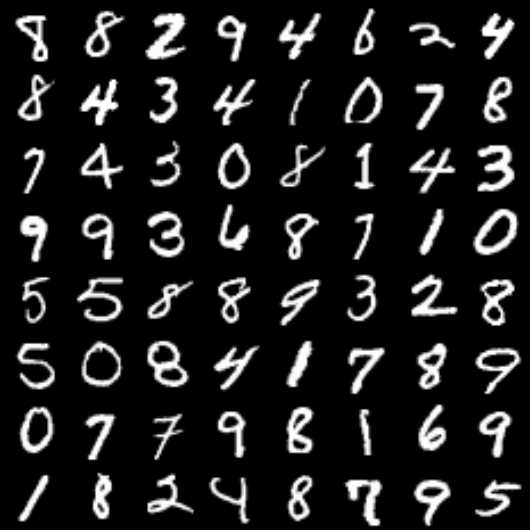, height=0.325\textwidth}&
\epsfig{file=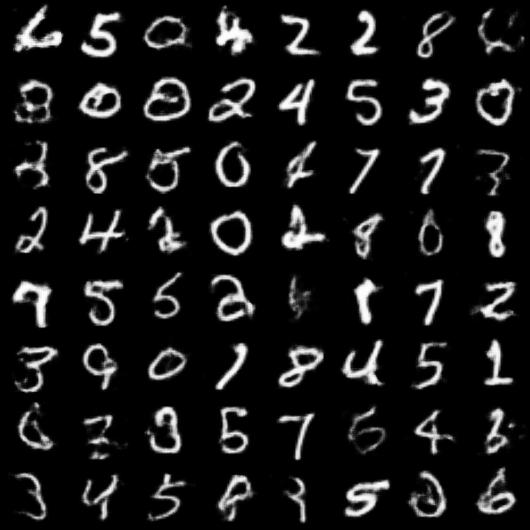, height=0.325\textwidth}\\
(a) real digits & (b) VAE\\
\epsfig{file=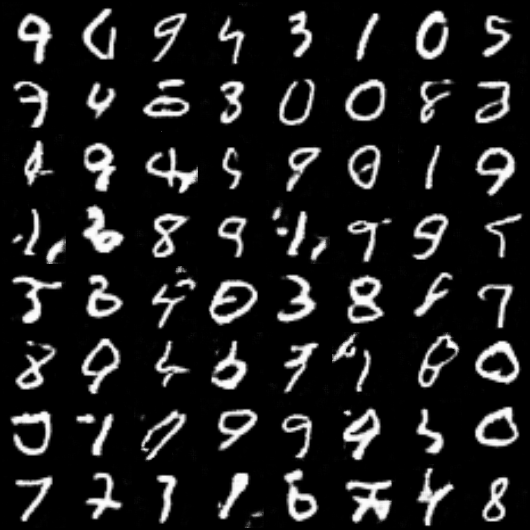, height=0.325\textwidth}&
\epsfig{file=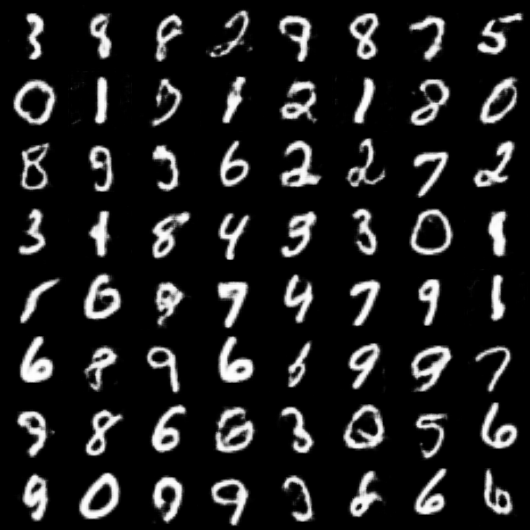, height=0.325\textwidth}\\
(c) WGAN & (d) AE-OMT
\end{tabular}
\end{center}
\label{fig:mnist}
\end{figure}

Fig.~\ref{fig:mnist} shows the experiments on the MNIST data set. The digits generates by OMT-AE have better qualities than those generated by VAE and WGAN. Fig.(\ref{fig: CelebA}) shows the human facial images on CelebA data set. The images generated by OMT-AE look better than those produced by VAE.

\setlength{\tabcolsep}{1mm}
\begin{figure}[h!]
\begin{center}
\begin{tabular}{cc}
\epsfig{file=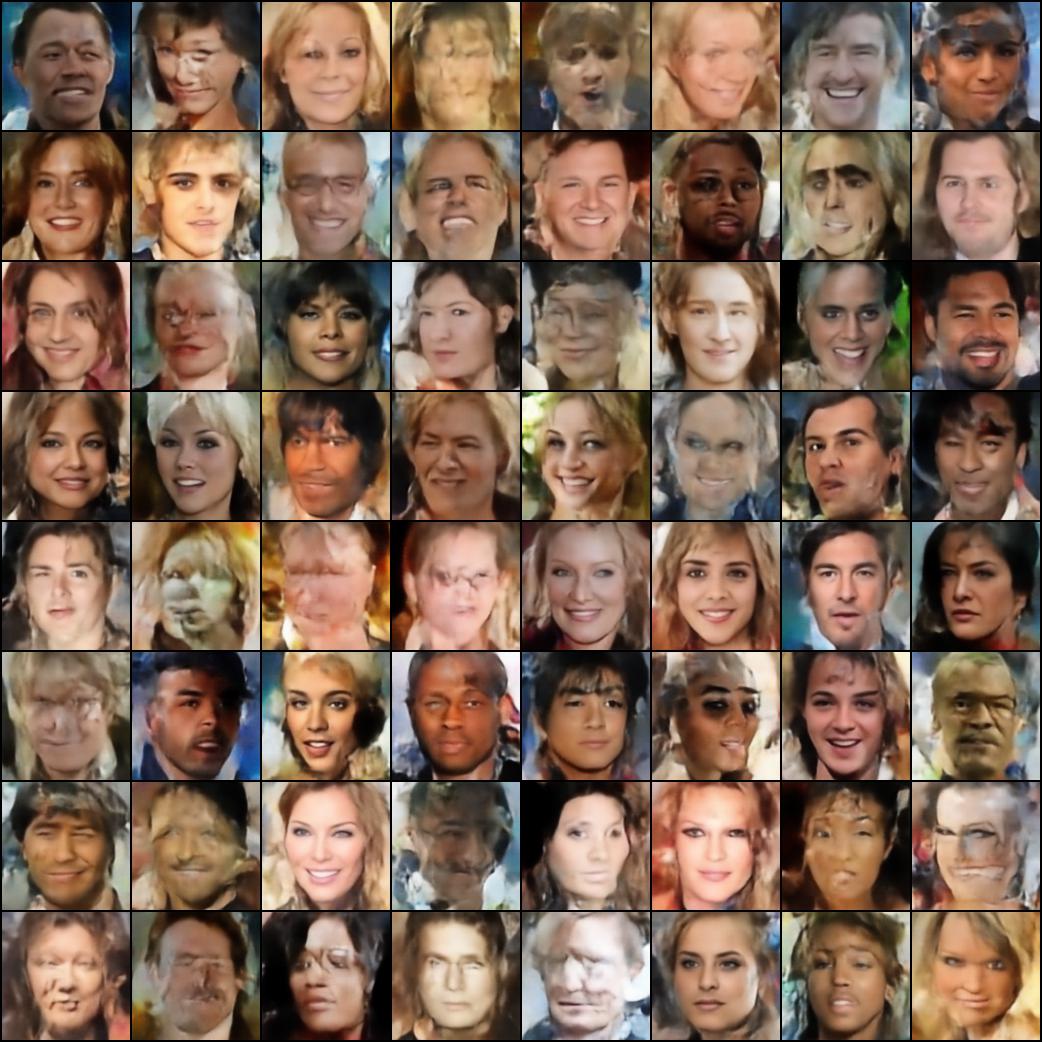, height=0.45\textwidth}&
\epsfig{file=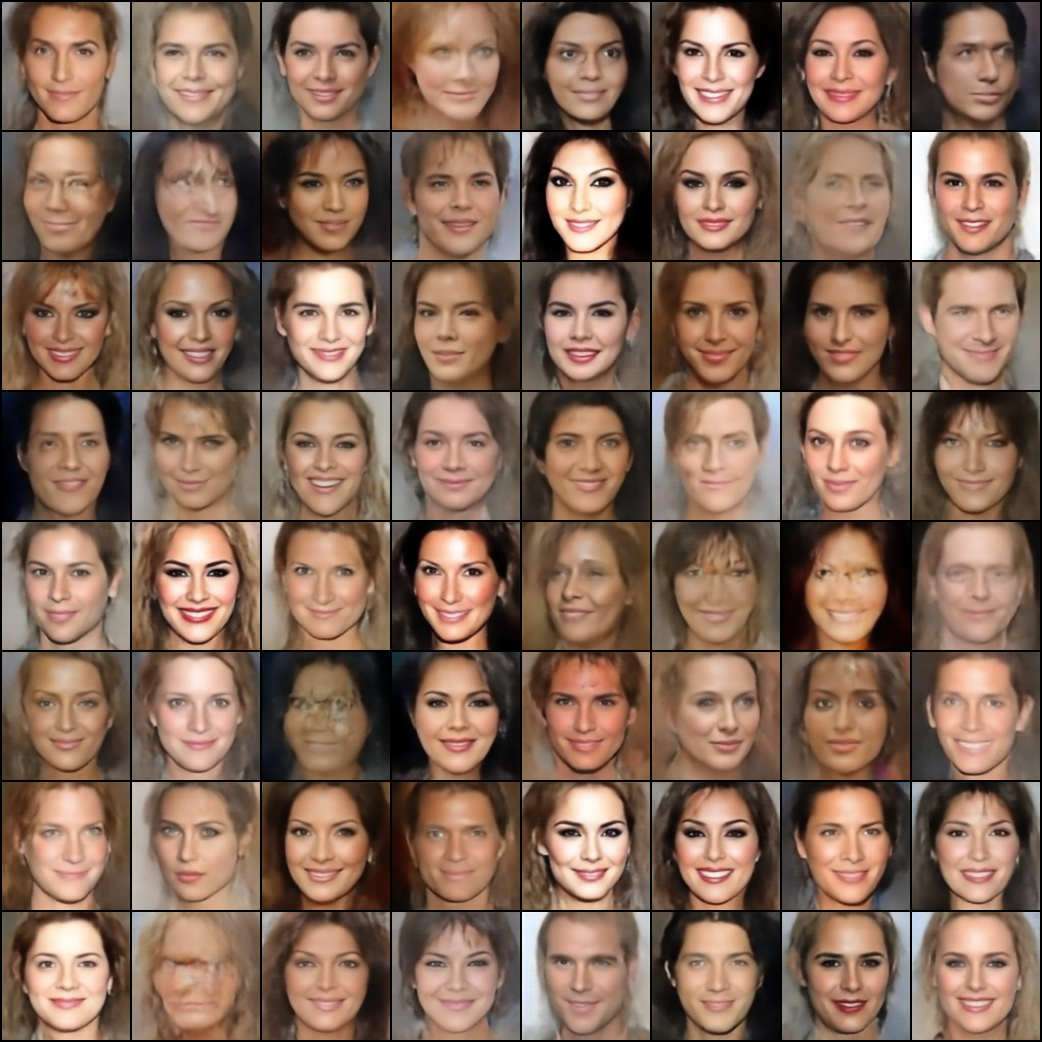, height=0.45\textwidth}\\
(a) VAE & (d) AE-OMT
\end{tabular}
\end{center}
\label{fig: CelebA}
\end{figure}

\section{Conclusion}

This work gives a geometric understanding of autoencoders and general deep neural networks. The underlying principle is the manifold structure hidden in data, which attributes to the great success of deep learning. The autoencoders learn the manifold structure and construct a parametric representation. The concepts of rectified linear complexities are introduced to both DNN and manifold, which describes the fundamental learning limitation of the DNN and the difficulty to be learned of the manifold. By applying the concept of complexities, it is shown that for any DNN with fixed architecture, there is a manifold too complicated to be encoded by the DNN. Experiments on surfaces show the approximation accuracy can be improved. By applying $L^2$ optimal mass transportation theory, the probability distribution in the latent space can be fully controlled in a more understandable and more efficient way.

In the future, we will develop refiner estimates for the complexities of the deep neural networks and the embedding manifolds, generalize the geometric framework to other deep learning models.

\section*{Appendix}
\label{sec:appendix}
Here we illustrate some examples and explain the implementation details.

\setlength{\tabcolsep}{4pt}
\begin{figure}[h!]
\begin{center}
\begin{tabular}{cccc}
\includegraphics[width=0.22\textwidth]{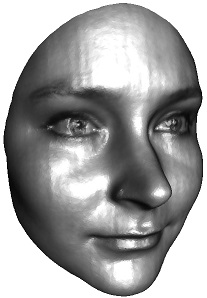}&
\includegraphics[width=0.30\textwidth]{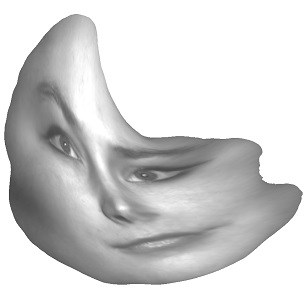}&
\includegraphics[width=0.22\textwidth]{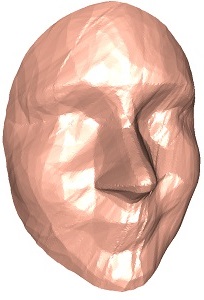}&
\includegraphics[width=0.22\textwidth]{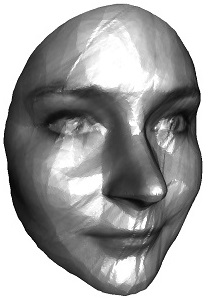}\\
input manifold& latent representation& reconstructed manifold& reconstructed manifold
\end{tabular}
\end{center}
\caption{A human facial surface is encoded/decoded by an autoencoder.}
\label{fig:sophie}
\end{figure}

\paragraph*{Facial Surface} Fig.~\ref{fig:sophie} shows a human facial surface $\Sigma$ is encoded/decoded by an autoencoder. From the image, it can be seen that the encoding/decoding maps are homeomorphic. The Hausdorff distance between the input surface and the reconstructed surface is relatively small, but the normal deviation is big. The geometric details around the mouth area are lost during the process. There are a lot of local curvature fluctuations. Furthermore, the shape of the encoding image (parameter image) in the latent space is highly irregular, this creates difficulty for generating random samples on the reconstructed manifold.

\setlength{\tabcolsep}{0pt}
\begin{figure}[h!]
\begin{center}
\begin{tabular}{ccc}
\includegraphics[height=0.31\textwidth]{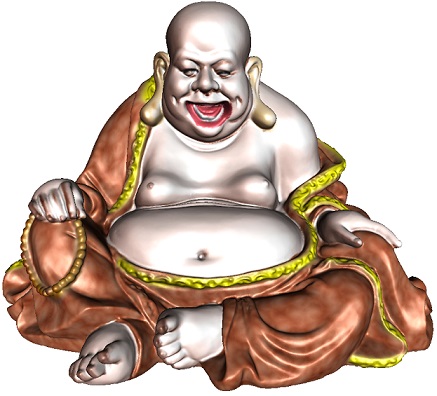}&
\includegraphics[height=0.31\textwidth]{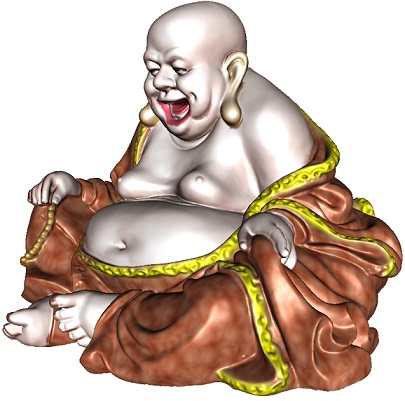}&
\includegraphics[height=0.31\textwidth]{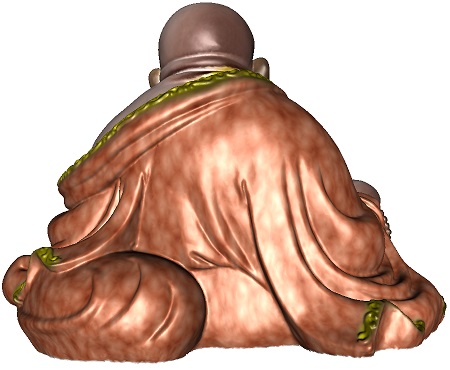}\\
a. front  view& b. left  view& c. back view\\
\includegraphics[height=0.31\textwidth]{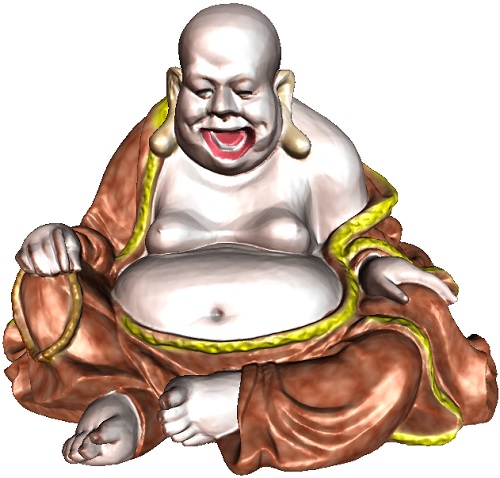}&
\includegraphics[height=0.31\textwidth]{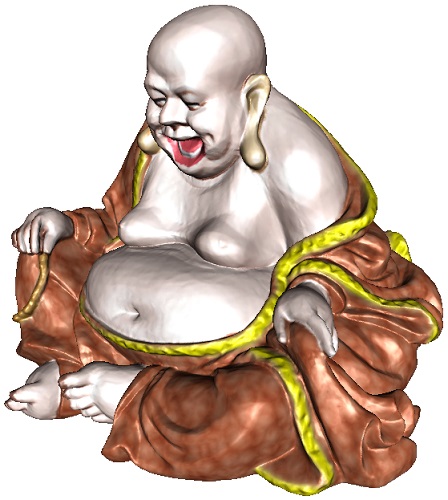}&
\includegraphics[height=0.31\textwidth]{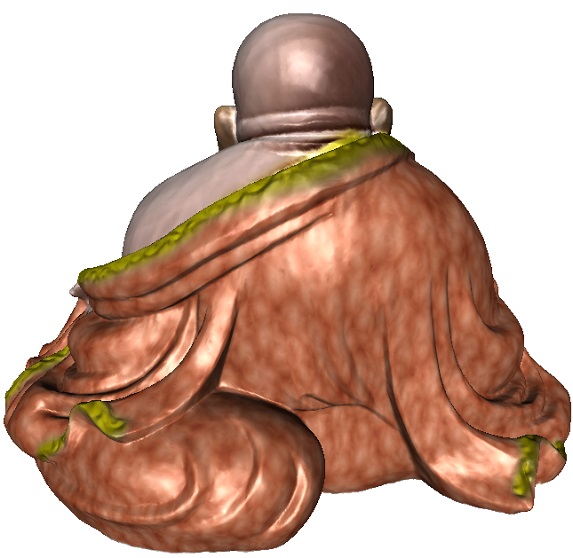}\\
d. right view&e. front view&f. back view\\
\end{tabular}
\end{center}
\caption{top row: input manifold;bottom row, reconstructed manifold.}
\label{fig:buddha}
\end{figure}

\paragraph*{Buddha Model}
Fig.~\ref{fig:buddha} shows the buddha model, the top row shows the three views of the input surface, the bottom row shows the reconstructed surface. The encoder network architecture is $\{3,768,384,192,96,48,2\}$, the decoder network is $\{2,48,96,192,384, 768,3\}$. The input and output spaces are $\mathbb{R}^3$, the latent space is $\mathbb{R}^2$. We use ReLU as the activation function in hidden layers except the latent space layer. The loss function is the mean squared error between the input and the target. Adam optimizer is used in this autoencoder and the weight decay is set to $0$ in the optimizer. From the figure, we can see the reconstruction approximates the original surface with high accuracy, all the subtle geometric features are well preserved. We uniformly sample the surface, there are $235,771$ samples in total. The number of cells in the cell decomposition induced by the reconstruction map is $230051$. We see that the autoencoder produces a highly refined cell decomposition to capture all the geometric details of the input surface. The source code and the data set can be found in \cite{Code}. If we reduce the number of neurons and add regularize the output surface, then the reconstructed surface loses geometric details, and preserves the major shape as shown in Fig.~\ref{fig:recondstructed_manifold_cell_decomposition}. Furthermore, the mapping is not homeomorphic either, near the mouth and finger areas, the mapping is degenerated.

%\setlength{\tabcolsep}{0pt}
%\begin{figure}[h!]
%\begin{center}
%\begin{tabular}{ccc}
%\includegraphics[height=0.33\textwidth]{buddha_cfp.jpg}&
%\includegraphics[height=0.33\textwidth]{buddha_LCSM.jpg}&
%\includegraphics[height=0.33\textwidth]{buddha_latent.jpg}\\
%a.conformal mapping&b. LSCM & c.autoencoding map
%\end{tabular}
%\end{center}
%\caption{Image in the latent space.}
%\label{fig:latent_space}
%\end{figure}
%
%\setlength{\tabcolsep}{0pt}
%\begin{figure}[h!]
%\begin{center}
%\begin{tabular}{ccc}
%\includegraphics[height=0.31\textwidth]{buddha_right_reconstructed.jpg}&
%\includegraphics[height=0.31\textwidth]{buddha_front_recon.jpg}&
%\includegraphics[height=0.31\textwidth]{buddha_back_reconstructed.jpg}\\
%a. right view&b. front view&c. back view\\
%\end{tabular}
%\end{center}
%\caption{Reconstructed manifold.}
%\label{fig:reconstructed_manifold}
%\end{figure}
%
%\setlength{\tabcolsep}{0pt}
%\begin{figure}[h!]
%\begin{center}
%\begin{tabular}{ccc}
%\includegraphics[height=0.3\textwidth]{budda_cube_006.jpg}&
%\includegraphics[height=0.3\textwidth]{budda_cube_008.jpg}&
%\includegraphics[height=0.3\textwidth]{budda_cube_004.jpg}\\
%%\includegraphics[height=0.25\textwidth]{figures/cube/budda_cube_010.jpg}\\
%\end{tabular}
%\end{center}
%\caption{Cell decomposition of the input ambient space produced by the encoding map $\mathcal{D}(\varphi_\theta)$.}
%\label{fig:cell_decomposition}
%\end{figure}
%
\setlength{\tabcolsep}{0pt}
\begin{figure}[h!]
\begin{center}
\begin{tabular}{ccc}
\includegraphics[height=0.3\textwidth]{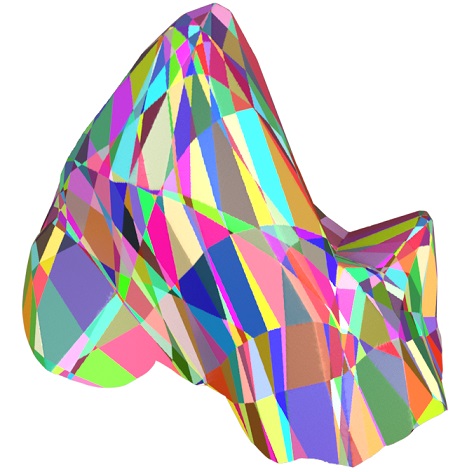}&
\includegraphics[height=0.3\textwidth]{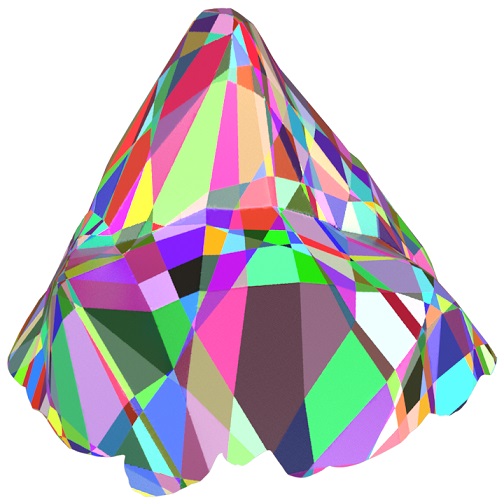}&
\includegraphics[height=0.3\textwidth]{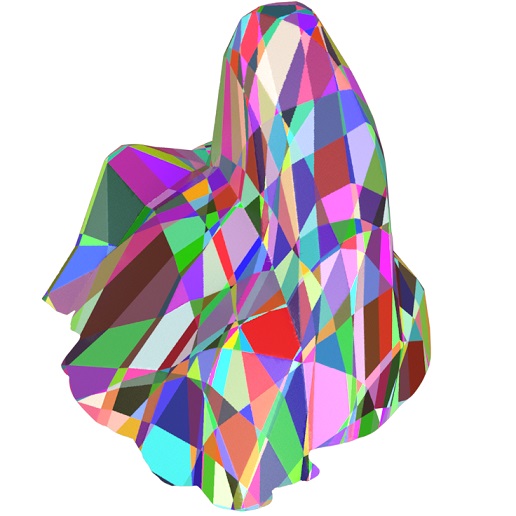}\\
a. right view & b. front view & back view\\
\end{tabular}
\end{center}
\caption{Reconstructed manifold with cell decomposition produced by an autoencoder with half of the neurons.}
\label{fig:recondstructed_manifold_cell_decomposition}
\end{figure}

\section*{Acknowledgement}

The authors thank our students: Yang Guo, Dongsheng An, Jingyao Ke, Huidong Liu for all the experimental results, also thank our collaborators:  Feng Luo, Kefeng Liu, Dimitris Samaras for the helpful discussions.

%%\input{appendix}
%%\end{document}
\newpage
%\bibliographystyle{plain}
%\bibliography{ref}

\end{document}